\documentclass{article} % For LaTeX2e
\usepackage{iclr2027_conference,times}

\usepackage{amsmath,amsfonts,bm}

\def\eqref#1{equation~\ref{#1}}
\def\1{\bm{1}}

\DeclareMathAlphabet{\mathsfit}{\encodingdefault}{\sfdefault}{m}{sl}
\SetMathAlphabet{\mathsfit}{bold}{\encodingdefault}{\sfdefault}{bx}{n}

\usepackage{hyperref}        % hyperlinks
\usepackage{url}             % simple URL typesetting
\usepackage{graphicx} % Required for inserting images
\usepackage{natbib}
\usepackage{capt-of}
\usepackage{bbm}

\usepackage[utf8]{inputenc} % allow utf-8 input
\usepackage[T1]{fontenc}    % use 8-bit T1 fonts
\usepackage{booktabs}       % professional-quality tables
\usepackage{amsfonts}       % blackboard math symbols
\usepackage{nicefrac}       % compact symbols for 1/2, etc.
\usepackage{microtype}      % microtypography
\usepackage{xcolor}         % colors
\usepackage{algorithm}
\usepackage{algpseudocode}

\usepackage{booktabs}
\usepackage{multirow}
\hypersetup{hidelinks}

\usepackage{amsfonts}
\usepackage{amsmath}
\usepackage{amsthm}
\newtheorem{theorem}{Theorem}
\newtheorem{lemma}{Lemma}
\newtheorem{corollary}{Corollary}

\title{Identifiable World Models from Pretrained Diffusion Representations
}
\author{
	Ruchi Sandilya$^{*}$, Conor Liston \& Logan Grosenick \\
	Department of Psychiatry and Brain \& Mind Research Institute \\
	Weill Cornell Medicine, New York, NY, USA \\
	$^{*}$Corresponding author: \texttt{sar4018@med.cornell.edu}
}

\iclrfinalcopy % Uncomment for camera-ready version, but NOT for submission.

\begin{document}

\maketitle
\begin{abstract}
Diffusion-based world models can generate and predict trajectories in high-dimensional dynamical systems, but predictive accuracy does not imply that their latent coordinates recover the underlying state variables or causal interactions. We ask whether a frozen pretrained diffusion model can be equipped with identifiable coordinates without retraining its generative backbone. We show that auxiliary-variable nonlinear ICA guarantees can be transferred to Contrastive Diffusion Alignment (ConDA), which learns only a lightweight alignment map on top of frozen diffusion latents. Under standard TCL/GCL assumptions, the aligned representation identifies latent dynamical states up to permutation and componentwise invertible transformations, preserves the latent dynamic structural causal model, and reduces lagged graph recovery to transition-Jacobian sparsity. We evaluate TCL-, GCL-, and CEBRA-based ConDA against TDRL, CaRiNG, IDOL, temporal SuaVE, and iVAE across physical and robotic video systems. TCL and GCL achieve near-perfect blockwise state recovery and competitive lagged graph recovery, including exact recovery in a simulated falling-body system. In a simulated bipedal robot, learned dynamics recover the sign and temporal structure of responses to held-out control perturbations. These results show that a frozen generative diffusion model can be equipped with coordinates that are identifiable, structurally interpretable, and useful for analyzing intervention-relevant dynamics.

\end{abstract}

% For Too Long; Didn't Read": a short sentence describing your paper
%We show that diffusion-based world models can recover identifiable latent causal dynamics, including causal structure and interventional responses, rather than merely generating accurate trajectories.

\section{Introduction}
Learning meaningful representations of high-dimensional dynamical systems is a central problem in scientific machine learning. Many scientific, biomedical, and robotic systems are governed by low-dimensional latent dynamics but are observed through high-dimensional measurements. Diffusion models provide powerful generative representations of such observations \citep{ho2020denoising,rombach2022high}, but generative fidelity alone does not imply that their internal representations recover the variables or interactions governing the underlying system. Indeed, multiple latent parameterizations can explain the same observations \citep{hyvarinen1999nonlinear,locatello2019challenging,khemakhem2020variational}, and predictive representations need not preserve the causal structure required for intervention, control, or scientific reasoning \citep{bengio2013representation,scholkopf2021toward}. Contrastive learning, in turn, can exploit temporal or auxiliary information to recover structured low-dimensional representations \citep{hyvarinen2016unsupervised,oord2018representation,chen2020simple,schneider2023cebra}. This raises a natural question: can a pretrained diffusion model retain its generative capacity while supporting an identifiable and controllable latent coordinate system that preserves causal structure?

Nonlinear ICA provides one route to identifiability through auxiliary variables
such as time, conditions, interventions, or control inputs
\citep{hyvarinen2017nonlinear, hyvarinen2019nonlinear, khemakhem2020variational},
but its extension to high-dimensional dynamical systems represented through
diffusion latents remains largely unexplored. Building on Contrastive Diffusion Alignment (ConDA) \citep{sandilya2026contrastivediffusionalignmentlearning}, we
study identifiability and causal graph recovery in diffusion-based world models. We distinguish three representation levels: the true low-dimensional dynamical state $s_t$, the high-dimensional diffusion latent $x_t$, and the observed frame $y_t$. Generatively, the system follows ($s_t \to x_t \to y_t$). For identification, we invert the observation into diffusion space and
learn an auxiliary-aligned embedding $z_t = h(x_t)$ ($y_t \to x_t \to z_t$). Identifiability then asks whether the learned coordinates $z_t$ recover the underlying state $s_t$ up to the admissible ambiguities of permutation and componentwise invertible transformations.

We use the standard density-ratio interpretation of supervised InfoNCE to connect auxiliary-aligned contrastive learning on diffusion latents with auxiliary-variable nonlinear ICA. Under the stated separability and identifiability assumptions, the learned embedding recovers the latent dynamical state up to permutation and componentwise invertible transformations. We further show that these coordinates preserve the factorization and parent structure of the latent dynamic SCM, and that the lagged causal graph is characterized by the sparsity of the inverse transition-noise Jacobian. 
In practice, under additive-noise dynamics, we estimate the corresponding graph using the sparsity of a learned forward-dynamics Jacobian.
Empirically, we evaluate TCL-, GCL-, and CEBRA-based ConDA embeddings \citep{hyvarinen2016unsupervised, hyvarinen2019nonlinear, schneider2023cebra}
on CausalVerse physical and robotic simulations with known ground-truth dynamics \citep{causalverse2025}, as well as on Microduck, a controlled bipedal locomotion environment based on the official Pollen Robotics reinforcement-learning framework \citep{microduck_rl}.  We assess whether these representations recover low-dimensional latent state structure and support lagged causal graph recovery; in Microduck, we additionally evaluate interventional prediction under held-out control perturbations. Overall, our framework evaluates world models beyond reconstruction and forecasting through latent identifiability, causal graph recovery, and preservation of intervention-relevant dynamics.

The role of diffusion in our framework is not to claim that diffusion latents are universally superior representations for state recovery. Rather, diffusion supplies a pretrained generative world model capable of synthesizing trajectories. Our question is whether this generative representation can be equipped, without retraining the backbone, with an identifiable coordinate system suitable for causal analysis and intervention. Accordingly, \(z_t\) is deliberately not a replacement generative representation: it is an identifiable coordinate chart on the frozen generative representation \(x_t\).

\begin{figure}[t]
    \begin{center}
    \includegraphics[width=\linewidth]{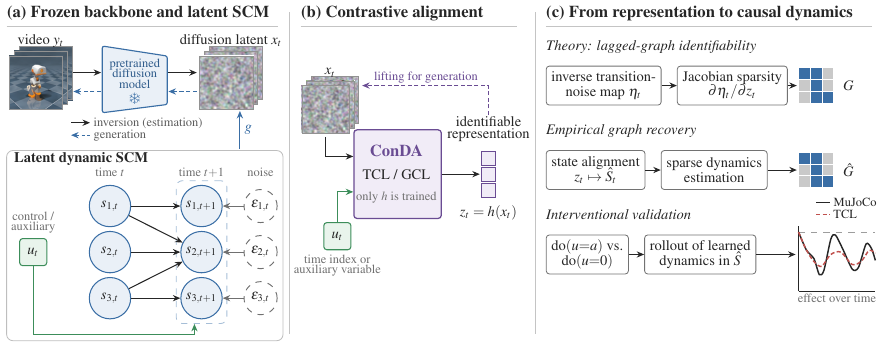}
    \end{center}
    %\caption{\textbf{Overview of the proposed framework.} A frozen pretrained diffusion model maps observed video frames \(y_t\) to diffusion latents \(x_t\). Temporal and auxiliary information \(u_t\) are then used by ConDA, instantiated with TCL, GCL, or CEBRA, to learn an identifiable representation \(z_t=h(x_t)\), recoverable up to permutation and componentwise invertible transformations while preserving the latent dynamic SCM. In the identifiable space, the lagged causal graph is theoretically identified by the sparsity pattern of the Jacobian of the inverse transition-noise map. Empirically, structural recovery is evaluated after Ridge alignment of the learned representation to ground-truth state coordinates, followed by residual Lasso and trajectory-bootstrap stability selection. Interventional validity is assessed separately by comparing learned and simulator responses under controlled interventions.}
  \caption{\textbf{Overview of the proposed framework.}
  (a)~A frozen pretrained diffusion model serves as a generative world model: it embeds video frames $y_t$ into diffusion latents $x_t$ by inversion (solid arrows) and decodes latents back into frames (dashed arrows). The latents are in turn generated by a lagged dynamic SCM over the latent state $s_t$, driven by inputs $u_t$ and independent noise, through $x_t = g(s_t)$.
  (b)~TCL- or GCL-based ConDA trains only a lightweight map $h$, using temporal or auxiliary information, to obtain $z_t = h(x_t)$. Under nonlinear ICA assumptions, $z_t$ identifies $s_t$ up to permutation and componentwise invertible transformations, and can be lifted back to $x_t$ for generation.
  (c)~Lagged causal structure is characterized theoretically by inverse transition-noise Jacobian sparsity and estimated empirically from states aligned from $z_t$. Dynamics learned in these coordinates are validated by predicting responses to held-out control perturbations.}
  \label{fig:method}
\end{figure}

\textbf{Contributions.}
(1) We formulate identifiability and causal graph recovery directly in the latent space of a frozen pretrained diffusion model, separating the underlying dynamical state $s_t$, diffusion representation $x_t$, observed frame $y_t$,
and auxiliary-aligned embedding $z_t=h(x_t)$. This decouples the generative world model from causal identification, as the diffusion backbone retains its capacity to generate trajectories, while the learned alignment supplies identifiable coordinates for analyzing and controlling its dynamics.
(2) We establish sufficient conditions under which TCL- or GCL-based alignment on frozen diffusion latents identifies the latent dynamical state up to permutation and componentwise invertible transformations, preserves the factorization and lagged parent structure of the latent dynamic SCM, and characterizes lagged causal edges through sparsity of the inverse transition-noise Jacobian.
(3) We evaluate identifiability, lagged causal graph recovery, and interventional prediction across physical and robotic video dynamical systems. TCL- and GCL-based ConDA variants achieve strong, and in several settings
near-perfect, blockwise state recovery and support recovery of lagged causal structure, including exact graph recovery in the simulated falling-body system. On controlled Microduck dynamics, we further evaluate whether the learned representations preserve responses to held-out control
perturbations. Together, these experiments provide an evaluation of diffusion-based world models beyond reconstruction and forecasting, using state recovery, structural recovery, and intervention-relevant dynamics.

\section{Related Work}
\textbf{World models and diffusion representations.}
World models learn latent representations that support prediction, planning, and
control in high-dimensional environments
\citep{ha2018world,hafner2019learning,hafner2020dream,hafner2023mastering}.
While effective for forecasting and decision making, these representations are
not generally identifiable and need not recover the underlying physical state or
causal dependencies required for mechanistic interpretation and intervention.
Diffusion models provide powerful generative representations for high-dimensional
data \citep{sohl2015deep,ho2020denoising,song2021score}, while latent diffusion
models improve scalability by operating in compressed representation spaces
\citep{rombach2022high}. However, diffusion representations are primarily optimized
for generative fidelity rather than recovery of identifiable dynamical states or
causal structure. We therefore study whether auxiliary-aligned embeddings of
frozen pretrained diffusion representations can recover low-dimensional latent
states and the lagged causal dependencies governing their dynamics.

\textbf{Contrastive learning and nonlinear ICA.}
Contrastive learning learns low-dimensional representations by bringing
positive pairs together while separating negative pairs
\citep{oord2018representation,chen2020simple,he2020momentum}. For dynamical
systems, TCL exploits temporal nonstationarity to recover latent sources under
nonlinear ICA assumptions
\citep{hyvarinen2016unsupervised,hyvarinen2017nonlinear}, while generalized
contrastive learning (GCL) uses auxiliary variables to establish nonlinear ICA
identifiability under broader forms of auxiliary information
\citep{hyvarinen2019nonlinear}. CEBRA similarly learns low-dimensional
embeddings aligned with behavioral, temporal, or neural variables
\citep{schneider2023cebra}. More generally, nonlinear latent variables are not
identifiable without additional assumptions
\citep{hyvarinen1999nonlinear,locatello2019challenging}; auxiliary information
such as time, labels, experimental conditions, or control inputs can provide
the structure required for identifiability
\citep{hyvarinen2019nonlinear,khemakhem2020variational}. We build on these
ideas by applying auxiliary-aligned contrastive objectives to frozen diffusion
latents and using the density-ratio interpretation of contrastive learning to
connect the learned embeddings to auxiliary-variable nonlinear ICA.

\textbf{Causal representation learning and causal discovery.}
Causal representation learning seeks latent representations that support causal
reasoning, intervention, and transfer
\citep{bengio2013representation,scholkopf2021toward}. Standard learned
representations may capture predictive or statistical structure without preserving
the underlying causal mechanisms. Causal discovery, in turn, aims to recover
directed dependencies from observational or interventional data
\citep{spirtes2000causation,pearl2009causality,peters2017elements}. In
time-series and dynamical systems, temporal ordering and lagged dependencies
provide additional structure for identifying directed interactions
\citep{granger1969investigating,runge2019detecting,tank2021neural}. However,
many causal discovery methods assume access to meaningful state variables,
whereas in high-dimensional systems the relevant dynamical variables may
themselves be latent. We address this gap by studying when embeddings learned
from diffusion representations recover identifiable latent states and preserve
their lagged causal structure. Under our dynamic SCM assumptions, lagged causal
edges are characterized by the sparsity pattern of the Jacobian of the inverse
transition-noise map. This characterization is related to Jacobian-based causal
discovery under nonlinear ICA \citep{reizinger2023jacobian}, adapted here to
auxiliary-aligned representations of frozen diffusion latents in dynamical
systems.

%\textbf{Positioning of this work.} 
% Our theoretical components individually adapt existing results:
% Theorem~\ref{thm:nonlinear-ica} specializes auxiliary-variable nonlinear ICA \citep{hyvarinen2019nonlinear, khemakhem2020variational} to first-order Markov dynamics, Lemma~\ref{lemma-density-ratio} restates the standard density-ratio interpretation of supervised InfoNCE \citep{oord2018representation}, and Theorem~\ref{thm:graph-recovery} adapts Jacobian-based causal discovery \citep{reizinger2023jacobian} to the dynamical setting. Prior causal representation learning approaches for time-series data \citep{yao2022temporally, chen2024caring, li2024identification} establish identifiability for encoders trained jointly with the dynamics objective. Our contribution is to extend these guarantees to \emph{dynamic diffusion models} using ConDA \citep{sandilya2026contrastivediffusionalignmentlearning}, where the encoder is a frozen pretrained diffusion latent rather than a jointly trained network. We show that auxiliary-aligned contrastive alignment of pretrained diffusion latents inherits identifiability up to permutation and componentwise invertible transformations (Theorem~\ref{thm:identifiability}), preserves the dynamic structural causal model (Theorem~\ref{thm:causal-rep}), and enables Jacobian-based recovery of lagged causal edges, evaluated on CausalVerse with both identifiability and graph-recovery criteria.
\textbf{Positioning of this work.} Our work addresses a setting largely unexplored in causal representation learning: recovering identifiable and causally meaningful states from \emph{frozen pretrained diffusion representations} of high-dimensional spatiotemporal systems. Rather than
learning the encoder jointly with the temporal or causal model \citep{yao2022temporally,chen2024caring,li2024identification}, we keep the
diffusion representation fixed and learn only a lightweight auxiliary-aligned mapping. Building on auxiliary-variable nonlinear ICA
\citep{hyvarinen2019nonlinear,khemakhem2020variational}, contrastive density-ratio estimation \citep{oord2018representation}, and Jacobian-based causal
discovery \citep{reizinger2023jacobian}, we show that the resulting embeddings identify latent dynamical states up to permutation and componentwise invertible transformations, preserve the factorization and lagged parent structure of the dynamic SCM, and characterize lagged causal edges through inverse transition-noise Jacobian sparsity. Empirically, TCL- and GCL-based ConDA variants achieve strong, often near-perfect, blockwise identifiability and recover lagged causal structure across CausalVerse systems, while Microduck experiments show that the learned representations also preserve the direction and temporal structure of responses to held-out control perturbations. Together, these results position pretrained diffusion representations as a reusable substrate for identifiable, structurally faithful, and intervention-relevant world models of high-dimensional dynamical systems.

\section{Method}
This section develops the theoretical connection between identifiable
representations learned from frozen diffusion latents, preservation of latent
dynamical structure and lagged causal graph recovery. Proofs of the main theoretical statements are
deferred to 
%This section presents the main theoretical statements; proofs are deferred to
Appendix \ref{app:nonlinear-ica-prrof}-\ref{app-jacobian-proof}.

%\subsection{Modeling Framework}
\textbf{Modeling Framework.}
We consider a latent dynamical system in which high-dimensional observations are generated from an underlying low-dimensional state:
\begin{align}
s_{t+1} = F(s_t,u_t,\varepsilon_t), \qquad
x_t = g(s_t), \qquad
y_t = r(x_t),
\end{align}
where $s_t \in \mathbb{R}^d$ denotes the true dynamical state,
$u_t$ denotes observed auxiliary information or externally imposed controls,
$x_t \in \mathbb{R}^p$ denotes the frozen diffusion representation, and
$y_t$ denotes the observed frame. Thus, the generative hierarchy is
$s_t \rightarrow x_t \rightarrow y_t.$
For identification and causal analysis, we learn a structured embedding
%\begin{equation}
$z_t = h(x_t)$,
%\end{equation}
yielding the estimation pipeline
$y_t \rightarrow x_t \rightarrow z_t.$
The embedding $z_t$ is not a generative layer; rather, it provides a
low-dimensional coordinate system for identifiability and causal analysis.
We assume that $g:\mathbb{R}^d\rightarrow\mathbb{R}^p$ is smooth and injective
on the support of the latent state distribution, so that the diffusion
representation preserves the information in $s_t$. The latent dynamics follow
a dynamic structural causal model (SCM),
\begin{equation}
s_{i,t+1}
=
F_i\!\left(
s_{\mathrm{Pa}(i),t},
s_{i,t},
u_t,
\varepsilon_{i,t}
\right),
\end{equation}
where $\mathrm{Pa}(i)$ denotes the lagged causal parents of $s_{i,t+1}$ and the
noise variables $\varepsilon_{i,t}$ are mutually independent across coordinates conditional on $u_t$. Throughout, we include the self-lag in the parent set, $i\in\mathrm{Pa}(i)$.

Our objective is to learn $z_t=h(x_t)$ such that the underlying latent
coordinates are identified up to permutation and componentwise invertible
transformations, while preserving the dynamical structure required for graph
recovery and intervention analysis.

\begin{theorem}
[\textbf{Identifiability of nonlinear ICA with auxiliary variables}]
\label{thm:nonlinear-ica}
Assume:
%\begin{enumerate}
    (1) $x_t=g(s_t)$, where $g$ is smooth and injective;
    (2) the latent process is first-order Markov conditional on $u_t$,
    \begin{equation}
    p(s_{t+1}\mid s_{1:t},u_{1:t})
    =
    p(s_{t+1}\mid s_t,u_t);
    \end{equation}
    (3) the transition density factorizes coordinatewise,
    \begin{equation}
    p(s_{t+1}\mid s_t,u_t)
    =
    \prod_{i=1}^d
    p_i(s_{i,t+1}\mid s_t,u_t);
    \end{equation}
    (4) each conditional log-density has exponential-family form
    \begin{equation}
    \log p_i(s_{i,t+1}\mid s_t,u_t)
    =
    \sum_{k=1}^{K_i}
    \lambda_{i,k}(s_t,u_t)
    q_{i,k}(s_{i,t+1})
    -
    \log Z_i(s_t,u_t),
    \end{equation}
    with linearly independent sufficient statistics $q_{i,k}$;
    (5) the modulation coefficients
    $\{\lambda_{i,k}(s_t,u_t)\}_{i,k}$ vary with sufficient rank over the
    auxiliary contexts.
%\end{enumerate}
Then the latent coordinates are identifiable from the diffusion representation
up to permutation and componentwise invertible scalar transformations:
\begin{equation}
z_{i,t}
=
\phi_i(s_{\pi(i),t}),
\qquad i=1,\ldots,d.
\end{equation}
\end{theorem}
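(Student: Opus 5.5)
The plan is to follow the standard auxiliary-variable nonlinear ICA argument of \citet{hyvarinen2019nonlinear,khemakhem2020variational}, treating the previous state together with the control, $w_t=(s_t,u_t)$, as the auxiliary variable that modulates the conditional density of $s_{t+1}$. Since $g$ is injective, $s_t=g^{-1}(x_t)$ is a deterministic function of $x_t$. Hence $(x_t,u_t)$ is an observable proxy for $w_t$, and the pair $(x_{t+1},(x_t,u_t))$ carries all the information used in the identifiability argument; assumption (2) ensures nothing else about the past is needed. I would reduce to the statement that any learned encoder $h$ whose induced model matches the data distribution must satisfy $h\circ g=$ permutation composed with componentwise maps. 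The interpretation of $h$ as the outcome of contrastive (TCL/GCL) training, via the density-ratio lemma, is deferred to later results. Here I only assume that $h$ is smooth and invertible on the image of $g$, and that the model's conditional density of $z_{t+1}$ given $(x_t,u_t)$ has the same factorized exponential-family form (3)--(4) with some coefficients $\tilde\lambda$ and statistics $\tilde q$.

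\textbf{Key steps.}
\begin{enumerate}
\item Set $v=h\circ g$, a diffeomorphism between $s$-space and $z$-space on the support.
\item Write the change of variables for the conditional density of $s_{t+1}$ given $w_t$. Equating the true and model log-densities gives
\begin{equation}
\sum_{i,k}\lambda_{i,k}(w_t)\,q_{i,k}(s_{i,t+1})-\log Z(w_t)
=\sum_{i,k}\tilde\lambda_{i,k}(w_t)\,\tilde q_{i,k}(v_i(s_{t+1}))-\log\tilde Z(w_t)+\log|\det J_v(s_{t+1})|.
\end{equation}
\item Pick contexts $w^{(0)},\dots,w^{(L)}$ with $L=\sum_i K_i$, as permitted by the sufficient-rank condition (5), and subtract the equation at $w^{(0)}$. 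This cancels the Jacobian term, which does not depend on $w$, and leaves $\Lambda\,q(s)=\tilde\Lambda\,\tilde q(v(s))+c$. Here $\Lambda$ stacks the differences $\lambda(w^{(l)})-\lambda(w^{(0)})$ and is invertible by (5). It follows that $q(s)=A\,\tilde q(v(s))+b$ for an invertible matrix $A$. Linear independence of the $q_{i,k}$ in (4) is what makes $A$ invertible, by the usual argument that evaluates the identity at $L+1$ points where the statistics are affinely independent.
\item Use the coordinatewise factorization of both sides to show that $A$ (blockwise) pairs each true coordinate with exactly one learned coordinate. The method is to differentiate with respect to $s_j$ and $s_{j'}$ for $j\neq j'$. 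Because each $q_{i,\cdot}$ depends only on $s_i$, the mixed partials of the left side vanish. This forces each $v_i$ to depend on a single $s_{\pi(i)}$. Invertibility of $v$ then makes $\pi$ a permutation and each $\phi_i$ a strictly monotone scalar map.
\end{enumerate}

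\textbf{Main obstacle.} I expect step 4 to be the hard part: passing from the affine relation between sufficient statistics to $v$ being a permutation composed with componentwise maps. For $K_i=1$ this needs extra nondegeneracy (e.g.\ non-Gaussianity or the TCL-style assumption that $q_i$ is not linear/quadratic). For general $K_i\ge 2$ it needs the ``strongly exponential'' or $K_i\ge2$ condition of \citet{khemakhem2020variational}, namely the second-derivative trick that builds the matrix of $\partial^2\tilde q/\partial s_j\partial s_{j'}$ and uses its full rank. My first attempt is to state the sufficient-rank condition (5) strongly enough to absorb these requirements, then invoke the iVAE/GCL theorem directly with auxiliary variable $w_t$. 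The remaining care is to verify that its hypotheses hold here. The auxiliary variable must be observed: it is, through $(x_t,u_t)$, because $g$ is injective. And its variability must be rich enough: this requires $(s_t,u_t)$ to range over enough distinct contexts, which holds provided the Markov chain visits a set of states with nonempty interior. These conditions would be recorded explicitly in the appendix proof.
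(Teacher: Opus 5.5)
Your proposal takes essentially the same route as the paper, whose proof is a one-line appeal to the TCL/GCL/iVAE identifiability results; you make that appeal concrete by using $(s_t,u_t)$, observed through $(x_t,u_t)$ because $g$ is injective, as the auxiliary variable, and then sketching the standard iVAE argument. You are also right that hypotheses (1)--(5), as literally stated, only yield the affine relation $q(s)=A\,\tilde q(v(s))+b$: the componentwise conclusion needs the extra $K_i\ge 2$ derivative-rank condition (or its $K_i=1$ analogue) from the cited theorems, which the paper's sketch imports only implicitly through the citation.
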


\textbf{Contrastive Diffusion Alignment.}
We learn an encoder $h$ on top of the frozen diffusion representation, $z_t=h(x_t)$, using temporal or auxiliary information. ConDA is therefore plug-and-play: the pretrained diffusion model remains fixed and only the alignment map $h$
is learned. For a supervised contrastive formulation, positive pairs are selected using
similarity in auxiliary context:
$\mathcal{P}(i) =
\{p\neq i:d(u_i,u_p)\leq\delta\}$ with negative samples corresponding to auxiliary contexts farther than
$\delta$. The supervised InfoNCE objective is
\begin{equation}
\mathcal{L}_{\mathrm{InfoNCE}}
=
-\frac{1}{N}
\sum_{i=1}^{N}
\log
\frac{
\sum_{p\in\mathcal{P}(i)}
\exp(\mathrm{sim}(z_i,z_p)/\tau)
}{
\sum_{a\in\mathcal{A}(i)}
\exp(\mathrm{sim}(z_i,z_a)/\tau)
},
\label{eq:infonce}
\end{equation}
where $\tau>0$ denotes the temperature parameter. The same framework accommodates different alignment objectives, including TCL-, GCL-, and CEBRA-based variants, which differ in how temporal or auxiliary structure defines positive and negative examples.

%\textbf{Contrastive Density-Ratio Estimation.}
%At the population optimum, supervised contrastive learning can be interpreted as density-ratio estimation.
%The connection between contrastive alignment and nonlinear ICA follows from density-ratio estimation at the population optimum.
\begin{lemma}
[\textbf{Contrastive Density Ratio Estimation}]
\label{lemma-density-ratio}
Let positive pairs be sampled from matched auxiliary contexts and negative
pairs from
\begin{equation}
p_{\mathrm{neg}}(x_t,x_t^-\mid u_t)
=
p(x_t\mid u_t)p(x_t^-).
\end{equation}
If the score class is sufficiently expressive and the population optimum is
attained, then
\begin{equation}
r^*(x_t,x_t^+,u_t)
=
\log
\frac{
p_{\mathrm{pos}}(x_t,x_t^+\mid u_t)
}{
p_{\mathrm{neg}}(x_t,x_t^+\mid u_t)
}
+
C(u_t).
\end{equation}
\end{lemma}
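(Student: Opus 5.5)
Our plan is to use the classical argument that turns an InfoNCE / noise-contrastive objective into a Bayes-optimal classifier. Fix an auxiliary context $u_t$ and consider one anchor $x_t$, one positive $x_t^+$, and $K$ negatives $x_t^{-,1},\dots,x_t^{-,K}$. Place the positive at a uniformly random index among the $K+1$ candidates. Write $f(x_t,x',u_t)=\mathrm{sim}(h(x_t),h(x'))/\tau$ for the score; "sufficiently expressive" lets us treat $f$ as an arbitrary measurable function. The population objective, i.e.\ the expectation of the per-anchor term of \eqref{eq:infonce}, is then the expected cross-entropy of a $(K+1)$-way softmax classifier whose logits are $f(x_t,x^{(j)},u_t)$ and whose label is the index of the positive.

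\textbf{Step 1: Bayes posterior.} Under the sampling model, the candidate at index $j$ is drawn from $p_{\mathrm{pos}}(\cdot\mid x_t,u_t)$, while every other candidate is drawn from $p(x^-)$. Using $p_{\mathrm{neg}}(x_t,x'\mid u_t)=p(x_t\mid u_t)p(x')$, the posterior probability that index $j$ holds the positive is
\begin{equation}
P(j\mid x_t,\{x^{(k)}\},u_t)=\frac{\rho(x_t,x^{(j)},u_t)}{\sum_{k}\rho(x_t,x^{(k)},u_t)},\qquad
\rho=\frac{p_{\mathrm{pos}}(x_t,x'\mid u_t)}{p_{\mathrm{neg}}(x_t,x'\mid u_t)}.
\end{equation}
The common factor $p(x_t\mid u_t)\prod_k p(x^{(k)})$ cancels between numerator and denominator.

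\textbf{Step 2: minimiser.} Cross-entropy is minimised exactly when the model's softmax equals the true posterior. This is Gibbs' inequality: the expected loss equals the conditional entropy plus an expected KL divergence, which is nonnegative. So at the optimum,
\begin{equation}
\frac{\exp f(x_t,x^{(j)},u_t)}{\sum_k \exp f(x_t,x^{(k)},u_t)}=\frac{\rho(x_t,x^{(j)},u_t)}{\sum_k\rho(x_t,x^{(k)},u_t)}
\end{equation}
for almost every configuration.

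\textbf{Step 3: identify $f$ up to a shift.} Softmax is invariant only under adding a term common to all logits. Taking log-ratios of the identity above for two indices $j,k$ gives
\begin{equation}
f(x_t,x^{(j)},u_t)-\log\rho(x_t,x^{(j)},u_t)=f(x_t,x^{(k)},u_t)-\log\rho(x_t,x^{(k)},u_t).
\end{equation}
Because the candidates range freely over the support, this difference cannot depend on the candidate. It is therefore a function $C(x_t,u_t)$ only, which yields $r^*=\log\rho+C$.

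\textbf{Main obstacle.} The statement claims a constant $C(u_t)$ that does not depend on $x_t$. The softmax argument alone only delivers $C(x_t,u_t)$, because any anchor-dependent shift leaves the loss unchanged. We would remove the $x_t$-dependence in one of two ways. The first is to symmetrise the objective, summing the InfoNCE loss over both roles as anchor and candidate, so that a shift must be common to both arguments and hence can depend only on $u_t$. The second, which we state as the interpretation behind "score class" and try first, is to read the lemma as a binary NCE (logistic) classification between positive and negative pairs with context $u_t$. There the optimal logit is exactly $\log\rho$ plus the log prior ratio of the two classes, which can depend only on $u_t$. In either case we also need $\rho$ to be finite and positive on the common support, which amounts to assuming absolute continuity of $p_{\mathrm{pos}}$ with respect to $p_{\mathrm{neg}}$.
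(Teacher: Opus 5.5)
Your argument is correct along the route you say you would try first, and it is essentially the paper's argument made rigorous. The paper's proof is a one-line assertion. It reads \eqref{eq:infonce} as a classifier separating matched pairs from $p_{\mathrm{pos}}$ and mismatched pairs from $p_{\mathrm{neg}}$, and claims that, for an expressive score class and ``in the large-negative-sample limit,'' the Bayes-optimal score is the log-likelihood ratio plus a term depending only on $u_t$.

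Your Steps 1--3 show that this does not follow from the softmax form. The $(K+1)$-way cross-entropy is invariant under anchor-dependent shifts of the logits, so its optimum is determined only up to $C(x_t,u_t)$. The large-$K$ limit does not change this, since the normaliser tends to $\mathbb{E}_{x^-}[\exp f(x_t,x^-,u_t)]$, which still depends on $x_t$.

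Your binary NCE reading is what actually produces $C(u_t)$. The logistic loss is pointwise strictly convex in the logit, so the optimum is unique and equals $\log\rho+\log[\pi(u_t)/(1-\pi(u_t))]$, with $\pi(u_t)$ the prior probability of a positive pair. Hence \emph{every} optimal score has the stated form, not merely some optimal score. One small correction: finiteness and positivity of $\rho$ need mutual absolute continuity of $p_{\mathrm{pos}}$ and $p_{\mathrm{neg}}$, or else a restriction of the claim to the common support. Absolute continuity in the one direction you state only gives finiteness.

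Drop the symmetrisation alternative, though; it does not give the lemma. Summing the loss over both roles does shrink the ambiguity to a function of $u_t$, but only around the common minimiser of the two terms. $\log\rho$ is not that minimiser, because $p_{\mathrm{neg}}$ is asymmetric: the anchor has law $p(x_t\mid u_t)$, while the negatives have law $p(x^-)$.

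In the reversed role (anchor $x_t^+$, candidates $x_t$ and fresh negatives), the optimal logits are $\log\bigl[p_{\mathrm{pos}}(x_t,x_t^+\mid u_t)/\bigl(m(x_t^+\mid u_t)\,p(x_t)\bigr)\bigr]+C_2(x_t^+,u_t)$, where $m$ is the marginal law of the positive. Matching this with $\log\rho+C_1(x_t,u_t)$ forces $C_1(x_t,u_t)=\log[p(x_t\mid u_t)/p(x_t)]+c(u_t)$. The symmetrised optimum is therefore $\log\bigl[p_{\mathrm{pos}}(x_t,x_t^+\mid u_t)/\bigl(p(x_t)\,p(x_t^+)\bigr)\bigr]+c(u_t)$. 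This differs from $\log\rho+C(u_t)$ by $\log[p(x_t\mid u_t)/p(x_t)]$, which depends on $x_t$ whenever $u_t$ is informative about $x_t$ --- precisely the regime in which the auxiliary variable is needed for identifiability.
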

Lemma~\ref{lemma-density-ratio} provides the connection between auxiliary-aligned contrastive learning and auxiliary-variable nonlinear ICA. We next connect the density-ratio representation learned by contrastive alignment to the latent state coordinates.

%\subsection{Identifiability of the Learned Representation}
\begin{theorem}
%[\textbf{Conditional identifiability under separable density-ratio structure}]
[\textbf{Identifiability Through Frozen Diffusion Representations}]
\label{thm:identifiability}
Assume that:
%\begin{enumerate}
    (1) $x_t=g(s_t)$, where $g$ is smooth and injective on the support of $s_t$, with a smooth inverse on its image; 
    (2) the observed auxiliary variable, latent process, sampling scheme, and
    classifier satisfy the assumptions of the applicable TCL or GCL identifiability result \citep{hyvarinen2016unsupervised,hyvarinen2019nonlinear}; and
    (3) any unmixing required by that result is performed.
Then the specified TCL or GCL procedure applied to $x_t$ identifies the latent state up to the equivalence class guaranteed by that result.
If that result gives coordinatewise identification after any required unmixing, its output $z_t$ satisfies
\begin{equation}
z_{i,t}=\phi_i(s_{\pi(i),t}),\qquad i=1,\ldots,d,
\end{equation}
for a permutation $\pi$ and invertible scalar functions $\phi_i$.
\end{theorem}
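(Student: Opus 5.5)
The proof is a reduction: treat the frozen diffusion latent $x_t=g(s_t)$ as the observed mixture in the cited TCL/GCL result, and check that nothing in that result's hypotheses or conclusion changes. Two facts carry the argument. First, $g$ is an admissible mixing function in the sense of those results. Second, the learned score depends on $x_t$ only through its class of admissible reparametrizations, so the equivalence class transfers verbatim to $s_t$.

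\emph{Step 1 (admissible mixing).} Assumption (1) makes $g:\mathbb{R}^d\to\mathbb{R}^p$ a smooth injective map onto its image $\mathcal{M}=g(\mathrm{supp}(s_t))$, with smooth inverse $g^{-1}:\mathcal{M}\to\mathbb{R}^d$. So $x_t$ is a smooth embedding of the sources into a $d$-dimensional submanifold of $\mathbb{R}^p$. The TCL and GCL results are stated for an invertible mixing $f$ with $x=f(s)$. The case $p>d$ fits by restricting attention to $\mathcal{M}$: any feature extractor $h$ on $\mathbb{R}^p$ matters only through its restriction $h|_{\mathcal{M}}$, and the composite $h\circ g$ is a map $\mathbb{R}^d\to\mathbb{R}^d$. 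Densities of $x_t$ are taken with respect to the induced volume on $\mathcal{M}$. The Jacobian factor from $g$ does not depend on the auxiliary variable, so it cancels in every density ratio indexed by $u_t$, including the ratio in Lemma~\ref{lemma-density-ratio}.

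\emph{Step 2 (invoke the identifiability result).} Lemma~\ref{lemma-density-ratio} says the population-optimal contrastive score equals the log density ratio up to $C(u_t)$. Writing the ratio in $s$-coordinates via Step~1 gives exactly the separable, exponential-family-modulated form in hypotheses (3)--(5) of Theorem~\ref{thm:nonlinear-ica}; the remaining hypotheses of that theorem are covered by assumption (2). The cited TCL/GCL argument then yields the following. For TCL, the learned features satisfy $q(h(x_t)) = A\,\tilde q(s_t)+b$ for an invertible matrix $A$. For GCL, they equal componentwise functions of the sources directly. Assumption (3) supplies the unmixing step: linear ICA for TCL removes $A$, leaving a permutation and componentwise maps.

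\emph{Step 3 (transfer to $s_t$).} Define $\psi=h\circ g$. The conclusion of Step~2 is a statement about $\psi$: $\psi_i(s)=\phi_i(s_{\pi(i)})$ for a permutation $\pi$ and functions $\phi_i$. Invertibility of each $\phi_i$ needs two facts together. The cited results give it for the source-level map. Smoothness of $g^{-1}$ ensures that composing with $g$ introduces no degeneracy, since $\psi$ is a diffeomorphism onto its image exactly when $h|_{\mathcal{M}}$ is. This gives $z_{i,t}=h(x_t)_i=\phi_i(s_{\pi(i),t})$.

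\emph{Main obstacle.} The delicate point is Step~1: justifying that results proved for square invertible mixings ($p=d$) apply when $p>d$, and that a generic network $h$ on $\mathbb{R}^p$ behaves like the feature extractor in those proofs. I would handle it by working intrinsically on $\mathcal{M}$ using the chart $g^{-1}$, and noting that the universality/expressiveness assumptions need only hold for functions on $\mathcal{M}$. Whatever is learned off $\mathcal{M}$ is irrelevant on the data support.
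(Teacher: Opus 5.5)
Your argument is the same reduction the paper makes. Its sketch observes that $g$ has a smooth inverse on its image, so the TCL/GCL result invoked through assumptions (2)--(3) applies to $x_t$ unchanged and yields its equivalence class, which is the stated coordinatewise form whenever that class is coordinatewise. Your Step~1 (working on $\mathcal{M}=g(\mathrm{supp}(s_t))$, where the $u_t$-independent Jacobian factor cancels in every ratio) and Step~3 (passing through $\psi=h\circ g$) spell out what the paper compresses into ``preserves the state information.'' This includes the $p>d$ case, which the paper leaves implicit.

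One sentence in Step~2 should be removed. The separable exponential-family structure in hypotheses (3)--(5) of Theorem~\ref{thm:nonlinear-ica} is an assumption on the latent process, supplied directly by assumption (2). It does not follow from rewriting a density ratio in $s$-coordinates; the implication runs the other way. In addition, Lemma~\ref{lemma-density-ratio} concerns the pairwise supervised InfoNCE score, which the paper explicitly places outside the coordinatewise claim. The TCL/GCL proofs instead rest on their own classifiers' optimal log-odds: segment-label posteriors for TCL, and $\log\frac{p(x,u)}{p(x)p(u)}$ for GCL. A smaller slip: in TCL it is $h(x_t)$ itself, not $q(h(x_t))$, that is affine in the sufficient statistics of $s_t$.

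Since assumption (2) already provides everything that Step~2 sentence tries to derive, deleting it leaves your proof complete.
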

%\textbf{Interpretation.}
% Theorem~\ref{thm:identifiability} is a sufficient-condition result and does not
% imply that arbitrary InfoNCE training automatically yields coordinate-wise
% identifiable representations. When the separability assumptions are only
% approximately satisfied, the learned representation may instead recover
% mixtures or blocks of latent variables.
\paragraph{Interpretation.}
Theorem~\ref{thm:identifiability} is a sufficient-condition result. It does not imply that arbitrary contrastive training on diffusion representations automatically yields coordinate-wise identifiable variables. When the identification assumptions are only approximately satisfied, the learned representation may recover mixtures or blocks of underlying factors rather than individual coordinates. We therefore assess identifiability empirically using MCC and blockwise $R^2$, together with downstream graph recovery. A generic pairwise similarity objective such as Eq. (7), or the InfoNCE-based CEBRA variant, is not covered by this coordinatewise-identification claim.

%\subsection{Causal Representation Preservation}
\begin{theorem}[\textbf{Preservation of dynamic SCM factorization}]
\label{thm:causal-rep}
Assume the latent dynamics admit the structural factorization
\begin{align}
    s_{i,t+1}
    =
    f_i\!\left(
        s_{\mathrm{Pa}(i),t},
        \varepsilon_{i,t},
        u_t
    \right),
\end{align}
where $\mathrm{Pa}(i)$ denotes the set of lagged parents of $s_{i,t+1}$,
including $i$, and the noises
$\varepsilon_{i,t}$ are mutually independent across $i$ conditional on $u_t$.
Suppose further that the output $z_t$ of the procedure in
Theorem~\ref{thm:identifiability}, after any required unmixing, is coordinatewise identified, so that, after resolving the permutation ambiguity,
\begin{align}
    z_{i,t}=\phi_i(s_{i,t}),
\end{align}
for invertible scalar functions $\phi_i$.
Then the dynamics in the learned coordinates admit the factorization
\begin{align}
    z_{i,t+1}
    =
    \tilde f_i\!\left(
        z_{\mathrm{Pa}(i),t},
        \varepsilon_{i,t},
        u_t
    \right),
\end{align}
where
%\begin{align}
$    \tilde f_i
    =
    \phi_i
    \circ
    f_i
    \circ
    \phi_{\mathrm{Pa}(i)}^{-1}$,
%\end{align}
with the inverse transformations applied componentwise to the parent
coordinates. Consequently, the transformed dynamics preserve the same
lagged parent sets and conditional factorization as the original dynamic SCM.
\end{theorem}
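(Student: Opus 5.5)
The argument is a direct change of variables: substitute the coordinatewise identification $z_{i,t}=\phi_i(s_{i,t})$ into the structural equations and check that no new dependencies appear. After resolving the permutation $\pi$ from Theorem~\ref{thm:identifiability}, relabel the latent coordinates so that $\pi$ is the identity. This costs nothing, because relabeling the nodes of the dynamic SCM maps parent sets to parent sets. Each $\phi_i$ is an invertible scalar map on the support of $s_{i,t}$, so $s_{i,t}=\phi_i^{-1}(z_{i,t})$ for every $i$ and $t$.

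First I would write the structural equation at time $t+1$ and apply $\phi_i$ to both sides:
\begin{align}
z_{i,t+1}=\phi_i(s_{i,t+1})=\phi_i\big(f_i(s_{\mathrm{Pa}(i),t},\varepsilon_{i,t},u_t)\big).
\end{align}
Then I would replace each parent coordinate by $s_{j,t}=\phi_j^{-1}(z_{j,t})$ for $j\in\mathrm{Pa}(i)$. Writing $\phi_{\mathrm{Pa}(i)}^{-1}$ for the product map that applies these scalar inverses componentwise, this gives $z_{i,t+1}=\tilde f_i(z_{\mathrm{Pa}(i),t},\varepsilon_{i,t},u_t)$ with $\tilde f_i=\phi_i\circ f_i\circ\phi_{\mathrm{Pa}(i)}^{-1}$. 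The noise and auxiliary arguments pass through unchanged.

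The key observation is that $\phi_{\mathrm{Pa}(i)}^{-1}$ is componentwise. It therefore maps the parent block of $z$ onto the parent block of $s$ and introduces no dependence on any $z_{j,t}$ with $j\notin\mathrm{Pa}(i)$. So the parent set of $z_{i,t+1}$ is contained in $\mathrm{Pa}(i)$.

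Next I would show the parent set is not strictly smaller. Suppose $\tilde f_i$ did not depend on $z_{j,t}$ for some $j\in\mathrm{Pa}(i)$. Then $f_i=\phi_i^{-1}\circ\tilde f_i\circ\phi_{\mathrm{Pa}(i)}$ would not depend on $s_{j,t}$, because $\phi_j$ is a bijection of the $j$-th coordinate alone. This contradicts $j$ being a genuine parent, where a parent is understood as an argument on which $f_i$ nontrivially depends. The lagged parent sets therefore coincide exactly.

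The last step is the conditional factorization. The noises $\varepsilon_{i,t}$ are mutually independent across $i$ given $u_t$, and $\varepsilon_{i,t}$ enters only $\tilde f_i$. Hence, conditional on $(z_t,u_t)$, the coordinates $z_{i,t+1}$ are independent. The conditioning on $z_t$ is equivalent to conditioning on $s_t$, since the map $s_t\mapsto z_t$ is a bijection, so conditioning sets and hence conditional independencies are preserved. Each factor depends on $z_t$ only through $z_{\mathrm{Pa}(i),t}$. This yields
\begin{align}
p(z_{t+1}\mid z_t,u_t)=\prod_{i=1}^d p(z_{i,t+1}\mid z_{\mathrm{Pa}(i),t},u_t).
\end{align}
When densities are used, the Jacobian of the componentwise change of variables is diagonal, so each factor transforms separately and the product structure survives.

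The main obstacle is precision rather than computation. One must define \emph{parent} as a nontrivial functional dependence, or equivalently a conditional dependence, so that the ``no spurious loss of edges'' direction is meaningful. One must also make sure the $\phi_i$ are bijective on the relevant supports, so that the inverses are well defined. Beyond these two points the argument is an immediate change of variables.
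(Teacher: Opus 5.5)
Your proposal is correct and follows the paper's own argument. The paper likewise substitutes $s_{j,t}=\phi_j^{-1}(z_{j,t})$ into the structural equation, applies $\phi_i$, and observes that a componentwise invertible map cannot mix coordinates. Your proof goes somewhat beyond the paper's sketch in two places: you prove the reverse inclusion of parent sets, and you derive $p(z_{t+1}\mid z_t,u_t)=\prod_i p(z_{i,t+1}\mid z_{\mathrm{Pa}(i),t},u_t)$ explicitly. That derivation tacitly uses the standard SCM exogeneity assumption that $\varepsilon_t$ is independent of $s_t$ given $u_t$; mutual independence of the noises given $u_t$ alone does not suffice, and the paper also leaves this assumption implicit.
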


% \subsection{Jacobian-Based Graph Recovery}
% \label{sec:jac}

For the transformed structural model, let $\eta_t = T(z_t,z_{t+1},u_t) $ denote the inverse transition-noise map.
\begin{theorem}
[\textbf{Lagged graph recovery}]
\label{thm:graph-recovery}
Assume that the representation is identifiable up to permutation and
componentwise invertible transformations, each structural mechanism is smooth
and bijective in its noise coordinate, the inverse map $T$ is differentiable
almost everywhere, and the model satisfies structural faithfulness. Then
\begin{equation}
\frac{\partial \eta_{i,t}}
     {\partial z_{j,t}}
=
0
\quad\Longleftrightarrow\quad
j\notin\mathrm{Pa}(i)
\end{equation}
almost everywhere. Hence, the sparsity pattern of the inverse transition-noise
Jacobian recovers the lagged directed causal graph.
\end{theorem}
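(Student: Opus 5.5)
The plan is to reduce the statement to the structural factorization of Theorem~\ref{thm:causal-rep} and then read off the Jacobian entries from the explicit form of the inverse noise map.

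\textbf{Step 1: normalize coordinates.} After resolving the permutation, identifiability gives $z_{i,t}=\phi_i(s_{i,t})$ with each $\phi_i$ invertible. Theorem~\ref{thm:causal-rep} then yields
\[
z_{i,t+1}=\tilde f_i\bigl(z_{\mathrm{Pa}(i),t},\varepsilon_{i,t},u_t\bigr),
\qquad
\tilde f_i=\phi_i\circ f_i\circ\phi_{\mathrm{Pa}(i)}^{-1}.
\]
Each $f_i$ is smooth and bijective in its noise argument, and $\phi_i$ is invertible. Hence, for fixed $(z_{\mathrm{Pa}(i),t},u_t)$, the map $\varepsilon\mapsto\tilde f_i(z_{\mathrm{Pa}(i),t},\varepsilon,u_t)$ is a bijection.

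\textbf{Step 2: write the inverse noise map explicitly.} Define $\eta_{i,t}$ as the inverse of $\tilde f_i$ in its noise argument:
\[
\eta_{i,t}=T_i(z_t,z_{t+1},u_t):=\tilde f_i^{-1}\bigl(z_{i,t+1};\,z_{\mathrm{Pa}(i),t},u_t\bigr).
\]
This recovers $\varepsilon_{i,t}$, and noise coordinates are unique only up to reparametrization, which does not affect sparsity. The key observation is that $T_i$ depends on $z_t$ only through $z_{\mathrm{Pa}(i),t}$.

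\textbf{Step 3: prove the direction ($\Leftarrow$).} If $j\notin\mathrm{Pa}(i)$, then $T_i$ does not depend on $z_{j,t}$. Therefore $\partial\eta_{i,t}/\partial z_{j,t}=0$ wherever $T_i$ is differentiable, which holds almost everywhere by assumption.

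\textbf{Step 4: prove the direction ($\Rightarrow$).} By contraposition, suppose $j\in\mathrm{Pa}(i)$ and the partial derivative vanishes almost everywhere. On a connected open support, $T_i$ would then be constant in $z_{j,t}$. Consequently $z_{i,t+1}=\tilde f_i(\cdot,\eta_{i,t},u_t)$ could be written without $z_{j,t}$. Since $\phi_j$ is invertible, the same would hold for $s_{j,t}$ in the original mechanism $f_i$. This contradicts structural faithfulness, which requires that every listed parent genuinely enters the mechanism on a set of positive measure. So the derivative is nonzero on a positive-measure set.

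\textbf{Main obstacle.} I expect the difficulty to lie in the ``almost everywhere'' quantifier in this last step. Faithfulness only delivers a positive-measure set where the derivative is nonzero, not a nonzero derivative almost everywhere. Closing this gap needs one of two additions: a genericity reading of faithfulness, namely that the partial derivative of $f_i$ in each parent vanishes only on a null set, or analyticity of the mechanisms, so that a derivative vanishing on a positive-measure set vanishes identically. I would first adopt the genericity reading as the meaning of structural faithfulness. I would then transfer it through the chain rule,
\[
\partial_{z_j}T_i=-\,\frac{\partial_{z_j}\tilde f_i}{\partial_{\varepsilon}\tilde f_i},
\]
using the implicit function theorem where $\partial_\varepsilon\tilde f_i\neq0$. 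The needed facts are that $\partial_\varepsilon\tilde f_i\neq0$ almost everywhere, which follows from bijectivity and smoothness, and that
\[
\partial_{z_j}\tilde f_i=\phi_i'\cdot\partial_{s_j}f_i\cdot(\phi_j^{-1})'.
\]
With $\phi_i$ and $\phi_j^{-1}$ smooth with nonvanishing derivatives almost everywhere, the factors $\phi_i'$ and $(\phi_j^{-1})'$ vanish only on null sets. The zero set of $\partial_{z_j}T_i$ therefore coincides, up to null sets, with that of $\partial_{s_j}f_i$. Finally, undoing the permutation shows that the sparsity pattern equals the lagged graph up to relabeling.
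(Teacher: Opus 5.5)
Your Steps 1--4 follow the paper's proof sketch essentially step for step. You invert each $\tilde f_i$ in its own noise coordinate, note that $T_i$ depends on $z_t$ only through $z_{\mathrm{Pa}(i),t}$ so non-parents give a vanishing partial derivative, and invoke structural faithfulness for parents. The obstacle you flag is real, and the paper does not resolve it: its sketch concludes only that $\partial\eta_{i,t}/\partial z_{j,t}\neq 0$ on a set of positive measure, which proves the ``identically zero versus not'' form of the sparsity claim. Your genericity reading of faithfulness is therefore exactly the extra hypothesis needed for the literal almost-everywhere biconditional. It must be paired with smooth $\phi_i$ whose derivatives vanish only on null sets, an assumption the statement leaves implicit. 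One justification needs fixing: $\partial_\varepsilon\tilde f_i\neq 0$ should be taken from the assumed a.e.\ differentiability of $T$, by differentiating $\tilde f_i(z_{\mathrm{Pa}(i),t},T_i,u_t)=z_{i,t+1}$ in $z_{i,t+1}$, not from bijectivity and smoothness, since a smooth strictly monotone map can have a derivative that vanishes on a positive-measure set.
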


\paragraph{Forward versus inverse Jacobian.}
Our experiments estimate a forward transition model
$\hat f_\theta(z_t,u_t)
\approx
\mathbb{E}[z_{t+1}\mid z_t,u_t].$
For additive-noise dynamics,
$
z_{i,t+1}
=
\tilde f_i(z_{\mathrm{Pa}(i),t},z_{i,t},u_t)
+
\eta_{i,t},
$
we have
\begin{equation}
\frac{\partial \eta_{i,t}}
     {\partial z_{j,t}}
=
-
\frac{\partial \tilde f_i}
     {\partial z_{j,t}},
\end{equation}
so the forward and inverse Jacobians have the same sparsity pattern. We
therefore use forward-Jacobian magnitudes as empirical edge scores. For
externally imposed controls, derivatives with respect to $u_t$ quantify
control-to-state dependencies separately from the endogenous state graph.
%%%%%%%%%%%%% \textbf{Forward versus inverse Jacobian.}
% Theorem~\ref{thm:graph-recovery} characterizes graph recovery through the
% inverse transition-noise map. In our experiments, we estimate a forward
% transition model in the aligned representation coordinates,
% \[
% \hat f_\theta(\hat{s}_t,u_t)
% \approx
% \mathbb{E}[\hat{s}_{t+1}\mid \hat{s}_t,u_t].
% \]
% For additive-noise dynamics,
% \[
% \hat{s}_{i,t+1}
% =
% \tilde f_i(
% \hat{s}_{\mathrm{Pa}(i),t},
% \hat{s}_{i,t},
% u_t)
% +
% \eta_{i,t},
% \]
% the inverse-noise map satisfies
% \[
% \eta_{i,t}
% =
% \hat{s}_{i,t+1}
% -
% \tilde f_i(
% \hat{s}_{\mathrm{Pa}(i),t},
% \hat{s}_{i,t},
% u_t),
% \]
% and therefore
% \begin{equation}
% \frac{\partial \eta_{i,t}}
%      {\partial \hat{s}_{j,t}}
% =
% -
% \frac{\partial \tilde f_i}
%      {\partial \hat{s}_{j,t}}.
% \end{equation}
% Thus, under additive-noise dynamics, the forward and inverse Jacobians have
% the same sparsity pattern. We therefore use forward-Jacobian magnitudes as
% empirical edge scores. For externally imposed controls, derivatives with
% respect to $u_t$ quantify control-to-state dependencies.

\begin{corollary}
Under the assumptions of
Theorems~\ref{thm:identifiability},
\ref{thm:causal-rep}, and
\ref{thm:graph-recovery},
the learned representation identifies the latent coordinates up to permutation
and componentwise invertible transformations, preserves the dynamic SCM
factorization after permutation resolution, and reduces lagged graph recovery
to recovery of the sparsity pattern of the inverse transition-noise Jacobian.
For additive-noise dynamics, the same sparsity pattern is obtained from the
forward transition Jacobian.
\end{corollary}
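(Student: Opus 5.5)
The corollary is a composition of the three preceding results, so I would prove it by chaining them in order and checking that each result's output supplies the next one's hypothesis. Nothing new needs to be derived except the additive-noise remark, which follows from the forward-versus-inverse Jacobian computation already given in the text.

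\emph{Step 1 (identification).} I would invoke Theorem~\ref{thm:identifiability} directly. Under its assumptions (1)--(3), the TCL or GCL procedure applied to $x_t=g(s_t)$, after any required unmixing, outputs $z_t$ with $z_{i,t}=\phi_i(s_{\pi(i),t})$ for a permutation $\pi$ and invertible scalar maps $\phi_i$. This is the first claim of the corollary.

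\emph{Step 2 (permutation resolution and SCM preservation).} Relabel the coordinates of $z_t$ by $\pi^{-1}$, so that $z_{i,t}=\phi_i(s_{i,t})$. This is exactly the hypothesis of Theorem~\ref{thm:causal-rep}, which then gives
\[
z_{i,t+1}=\tilde f_i\bigl(z_{\mathrm{Pa}(i),t},\varepsilon_{i,t},u_t\bigr),\qquad \tilde f_i=\phi_i\circ f_i\circ\phi_{\mathrm{Pa}(i)}^{-1},
\]
with the same parent sets and the same conditional independence of the noises.

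The one subtlety here is that the relabeling must be applied consistently at times $t$ and $t+1$. This holds because $\pi$ is a fixed property of the map $h$ and does not depend on $t$. Consequently, the graph recovered in $z$-coordinates equals the true graph up to this single relabeling.

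\emph{Step 3 (graph recovery).} The transformed model is identifiable up to permutation and componentwise maps, and $\tilde f_i$ inherits smoothness and bijectivity in its noise coordinate. The latter holds because $\phi_i$ is an invertible scalar map composed with a bijection in $\varepsilon_{i,t}$; if needed, I would assume the $\phi_i$ are diffeomorphisms so that differentiability of $T$ carries over. The hypotheses of Theorem~\ref{thm:graph-recovery} are therefore met for the model of Step 2, and the sparsity of $\partial\eta_{i,t}/\partial z_{j,t}$ characterizes $\mathrm{Pa}(i)$ almost everywhere.

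\emph{Step 4 (additive noise).} For the additive-noise case, $\eta_{i,t}=z_{i,t+1}-\tilde f_i(\cdot)$ gives $\partial\eta_{i,t}/\partial z_{j,t}=-\partial\tilde f_i/\partial z_{j,t}$, so the forward and inverse Jacobians have identical supports.

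The main obstacle is the compatibility check in Step 3. One must ensure that faithfulness and almost-everywhere differentiability are stated for the transformed model, or else transfer them from the original model through the componentwise diffeomorphisms. I would handle this by assuming the $\phi_i$ are $C^1$ with nonvanishing derivative. Then
\[
\frac{\partial\eta_{i,t}}{\partial z_{j,t}}=(\text{nonzero factor})\cdot\frac{\partial(\text{original inverse-noise map})_i}{\partial s_{j,t}},
\]
which preserves zero patterns almost everywhere.
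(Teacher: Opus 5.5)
Your proposal is correct and follows the paper's route. The paper gives no separate proof: the corollary comes from chaining Theorems~\ref{thm:identifiability}, \ref{thm:causal-rep}, and \ref{thm:graph-recovery} and then applying the identity $\partial\eta_{i,t}/\partial z_{j,t}=-\partial\tilde f_i/\partial z_{j,t}$ from the forward-versus-inverse Jacobian paragraph. Your extra assumption that the $\phi_i$ are $C^1$ with nonvanishing derivative is harmless but unnecessary, because the corollary already imposes the hypotheses of Theorem~\ref{thm:graph-recovery} (smoothness, a.e.\ differentiability of $T$, and faithfulness) directly on the transformed model.
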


\section{Experiments}
%\paragraph{A note on scope.}
%Our pipeline performs post-hoc causal analysis on pretrained diffusion-based representations of dynamical systems. We do not train a new generative world model, nor do we forecast $y_{t+1}$ from $z_t$ in pixel space; the dynamics model $f_\theta$ used for Jacobian-based graph recovery operates within the learned latent space and is not used for generation. Extending ConDA-aligned latents to a generative world model that produces $y_{t+1}$ remains future work.
\textbf{Experimental Setup.}
We evaluate identifiability and lag-1 causal graph recovery on video datasets from
CausalVerse~\citep{causalverse2025}, which contain simulated physical and robotic
systems with known ground-truth causal structure. 
Details of the corresponding reference graphs are provided in Appendices~\ref{app:physical-fall}, \ref{app:robotics-study}, and \ref{app:microduck}. 
We further evaluate
identifiability, controlled dynamics graph recovery, and interventional prediction
on videos generated in Microduck, a bipedal robotic locomotion environment based
on the official Microduck RL framework~\citep{microduck_rl} and simulated in
MuJoCo~\citep{todorov2012mujoco}. The Microduck dataset is generated using walking
and standing policies and, unlike the CausalVerse Physical Fall and Robotics
settings, explicitly includes externally imposed control commands. This enables us
to assess whether learned representations recover not only the underlying physical
state and its dynamical dependencies, but also the system's response to controlled
interventions.

For CausalVerse Physical Fall and Microduck, we use $5000$ videos, with $1000$
held out for testing. For the CausalVerse Robotics Study, we use $2700$ videos,
with $540$ held out for testing. We evaluate ConDA with three contrastive
objectives: CEBRA~\citep{schneider2023cebra},
GCL~\citep{hyvarinen2019nonlinear}, and TCL~\citep{hyvarinen2016unsupervised}.
We compare against five CRL baselines:
IDOL~\citep{li2024identification}, CaRiNG~\citep{chen2024caring},
TDRL~\citep{yao2022temporally}, temporal variant of SuaVE~\citep{liu2026identifying},
and iVAE~\citep{khemakhem2020variational}. All methods are evaluated over five random seeds, and we report the mean and
standard deviation across seeds.

Our pipeline consists of three stages. First, we train a conditional latent
diffusion model (cLDM) on the video data. After training, the diffusion model is
frozen and used only as a pretrained representation model. Each video frame
$y_t$ is first encoded into the latent space of the diffusion model using the
Stable Diffusion VAE~\citep{rombach2022high}. We then apply Rex-RK4 ~\citep{blasingame2026rex} with $8$ inversion steps to the frozen diffusion model to
obtain the corresponding inverted diffusion latent $x_t$.
Second, we learn ConDA representations
$z_t=h(x_t)$ from these frozen diffusion latents. ConDA uses temporal or
auxiliary information $u_t$ through CEBRA, GCL, or TCL to recover structured
latent coordinates. The resulting frame-level embeddings are then assembled
into latent trajectories $\{z_t\}_{t=1}^{T}$ for downstream identifiability,
dynamics, and causal analysis.
Third, we evaluate latent-state recovery and lag-1 graph recovery from the aligned representations. For CausalVerse Physical Fall and Robotics, lagged dependencies are estimated using residual Lasso with trajectory-level bootstrap stability. For Microduck, we instead learn a controlled transition model $[\hat S_t,U_t]\mapsto\hat S_{t+1}$ and estimate state- and control-dependent edges from the average absolute transition Jacobian. Binary graphs are obtained using the reference parent count for each child, applied consistently across methods. This fixes graph density across methods and isolates recovery of the correct parent structure from the separate problem of choosing a sparsity threshold. We therefore interpret binary graph metrics as measuring edge recovery conditional on the true parent count, and additionally report AUPRC as a threshold-free measure of edge-ranking performance. Detailed graph-recovery procedures are given in Algorithms~\ref{alg:residual_lasso} and~\ref{alg:controlled_graph} (Appendix~\ref{app:graph-recovery}).

\textbf{Evaluation Metrics.}
We measure coordinate-wise identifiability by MCC, the mean absolute Pearson correlation between ground-truth state coordinates and learned embedding coordinates under optimal matching, and block-wise identifiability by $R^2$, the variance of ground-truth state blocks explained by nonlinear regression on $z_t$. For graph recovery, we report F1, accuracy, structural Hamming distance (SHD), AUPRC, and structural intervention distance (SID); formal definitions are provided in Appendix~\ref{app:metrics}.
We evaluate interventional prediction in Microduck using $100$ matched simulator
trajectories. Starting from the same state at $t=50$, we compare $do(u_x=\pm 0.25)$ and $do(u_y=\pm 0.17)$ with the zero-command baseline, holding the other commands at zero, and roll out each intervention for 25 steps (1\,s at 25\,Hz). The intervention values lie within the empirical training distribution (Fig.~4). For a command $u_c$ and its target velocity $v$, we define
\begin{equation*}
\mathrm{DIE}_{v}(k) = \mathbb{E}_i\big[v_{i,t+k}^{do(u_c=a)} - v_{i,t+k}^{do(u_c=0)}\big].
\end{equation*}
We compare the MuJoCo response with a true-state dynamics baseline trained on
$[S_t,U_t]$ and a representation-based model trained on
$[\hat S_t,U_t]$, where $\hat S_t$ is obtained from the learned representation; additional details are provided in Appendix \ref{app:DTE}.

\begin{table}[t]
\caption{
\textbf{Identifiability and lag-1 graph recovery across dynamical systems.}
Mean $\pm$ standard deviation across seeds are reported for CausalVerse Physical Fall, CausalVerse Robotics Study, and Microduck. MCC and $R^2$ quantify latent-state recovery, while F1, accuracy, SHD, AUPRC, and SID measure recovery of the lag-1 dynamics graph. ConDA-TCL and ConDA-GCL show consistently strong performance across both identifiability and graph recovery, with exact graph recovery on Physical Fall and competitive performance on the Robotics Study and Microduck. Arrows indicate whether higher ($\uparrow$) or lower ($\downarrow$) values are preferred.
}
\label{tab:causal-comp}
\begin{center}
{
\resizebox{\textwidth}{!}{%
\setlength{\tabcolsep}{2.1pt}   % default is 6pt
\begin{tabular}{lccccccc}
%\begin{tabular}{@{}l*{7}{c}@{}}
\toprule
\textbf{Method} & \textbf{MCC $\uparrow$} & $\mathbf{R^2 \uparrow}$ & \textbf{F1 $\uparrow$} & \textbf{Acc. $\uparrow$} & \textbf{SHD $\downarrow$} & \textbf{AUPRC $\uparrow$} & \textbf{SID $\downarrow$} \\
\midrule
\multicolumn{8}{c}{\textbf{CausalVerse Physical Fall}} \\
\addlinespace[0.5em]
%\multirow{6}{*}{\shortstack[l]{CausalVerse\\Physical Fall}}
CaRiNG        & 0.129 ± 0.013 & 0.497 ± 0.016 & 0.800 ± 0.093 & 0.900 ± 0.046 & 0.400 ± 0.185 & 0.743 ± 0.147 & 0.047 ± 0.021 \\
IDOL          & 0.307 ± 0.014 & 0.418 ± 0.005 & 0.733 ± 0.061 & 0.866 ± 0.030 & 0.533 ± 0.121 & 0.618 ± 0.060 & 0.062 ± 0.014 \\
TDRL          & 0.293 ± 0.045 & 0.480 ± 0.017 & 0.755 ± 0.049 & 0.877 ± 0.024 & 0.488 ± 0.099 & 0.676 ± 0.100 & 0.057 ± 0.011 \\
SuaVE         & 0.438 ± 0.014 & 0.600 ± 0.020 & 0.711 ± 0.060 & 0.855 ± 0.030 & 0.577 ± 0.121 & 0.580 ± 0.072 & 0.068 ± 0.014 \\
iVAE          & \textbf{0.714 ± 0.004} & 0.998 ± 0.000 & 0.977 ± 0.049 & 0.988 ± 0.024 & 0.044 ± 0.099 & 0.983 ± 0.037 & 0.005 ± 0.011 \\
ConDA-CEBRA   & 0.538 ± 0.006 & 0.759 ± 0.001 & 0.800 ± 0.093 & 0.900 ± 0.046 & 0.400 ± 0.185 & 0.696 ± 0.123 & 0.047 ± 0.021 \\
ConDA-TCL     & 0.694 ± 0.001 & 0.998 ± 0.000 & \textbf{1.000 ± 0.000} & \textbf{1.000 ± 0.000} & \textbf{0.000 ± 0.000} & \textbf{1.000 ± 0.000} & \textbf{0.000 ± 0.000} \\
ConDA-GCL     & 0.700 ± 0.002 & \textbf{0.999 ± 0.000} & \textbf{1.000 ± 0.000} & \textbf{1.000 ± 0.000} & \textbf{0.000 ± 0.000} & \textbf{1.000 ± 0.000} & \textbf{0.000 ± 0.000} \\
\midrule
\multicolumn{8}{c}{\textbf{CausalVerse Robotics Study}} \\
\addlinespace[0.5em]
CaRiNG        & 0.509 ± 0.005 & 0.749 ± 0.003 & 0.874 ± 0.006 & 0.988 ± 0.000 & 0.225 ± 0.014 & 0.786 ± 0.013 & 0.010 ± 0.000 \\
IDOL          & 0.412 ± 0.001 & 0.650 ± 0.003 & 0.877 ± 0.000 & 0.988 ± 0.000 & 0.218 ± 0.000 & 0.792 ± 0.000 & 0.010 ± 0.000 \\
TDRL          & 0.481 ± 0.018 & 0.740 ± 0.002 & 0.853 ± 0.012 & 0.986 ± 0.001 & 0.268 ± 0.028 & 0.751 ± 0.024 & 0.010 ± 0.000 \\
SuaVE         & 0.574 ± 0.010 & 0.768 ± 0.002 & 0.858 ± 0.019 & 0.985 ± 0.002 & 0.275 ± 0.040 & 0.771 ± 0.023 & 0.007 ± 0.000 \\
iVAE          & 0.678 ± 0.003 & \textbf{1.000 ± 0.000} & \textbf{0.934 ± 0.016} & \textbf{0.993 ± 0.001} & \textbf{0.125 ± 0.031} & \textbf{0.898 ± 0.016} & \textbf{0.005 ± 0.000} \\
ConDA-CEBRA   & 0.587 ± 0.002 & 0.753 ± 0.003 & 0.889 ± 0.011 & 0.989 ± 0.000 & 0.206 ± 0.017 & 0.815 ± 0.023 & 0.007 ± 0.001 \\
ConDA-TCL     & 0.679 ± 0.005 & \textbf{1.000 ± 0.000} & 0.917 ± 0.011 & 0.992 ± 0.001 & 0.156 ± 0.022 & 0.875 ± 0.013 & 0.006 ± 0.000 \\
ConDA-GCL     & \textbf{0.683 ± 0.002} & \textbf{1.000 ± 0.000} & 0.911 ± 0.026 & 0.991 ± 0.002 & 0.168 ± 0.052 & 0.869 ± 0.016 & 0.006 ± 0.000 \\
\midrule
% \multirow{6}{*}{Microduck}
\multicolumn{8}{c}{\textbf{Microduck}} \\
\addlinespace[0.5em]
CaRiNG        & 0.075 ± 0.025 & 0.323 ± 0.010 & 0.569 ± 0.087 & 0.792 ± 0.042 & 0.207 ± 0.042 & 0.703 ± 0.017 & 0.099 ± 0.031 \\
IDOL          & 0.374 ± 0.003 & 0.357 ± 0.004 & 0.615 ± 0.094 & 0.814 ± 0.045 & 0.185 ± 0.045 & 0.739 ± 0.075 & 0.096 ± 0.033 \\
TDRL          & 0.377 ± 0.010 & 0.383 ± 0.003 & 0.584 ± 0.116 & 0.800 ± 0.056 & 0.200 ± 0.056 & 0.713 ± 0.033 & 0.097 ± 0.045 \\
SuaVE         & 0.378 ± 0.029 & 0.390 ± 0.004 & 0.584 ± 0.042 & 0.800 ± 0.020 & 0.200 ± 0.020 & 0.714 ± 0.053 & 0.124 ± 0.008 \\
iVAE          & 0.626 ± 0.002 & \textbf{1.000 ± 0.000} & 0.753 ± 0.034 & 0.881 ± 0.016 & 0.118 ± 0.016 & 0.798 ± 0.026 & 0.081 ± 0.009 \\
ConDA-CEBRA   & 0.493 ± 0.001 & 0.695 ± 0.006 & 0.507 ± 0.042 & 0.763 ± 0.020 & 0.237 ± 0.020 & 0.682 ± 0.007 & 0.123 ± 0.010 \\
ConDA-TCL     & 0.628 ± 0.001 & \textbf{1.000 ± 0.000} & \textbf{0.769 ± 0.054} & \textbf{0.888 ± 0.026} & \textbf{0.111 ± 0.026} & \textbf{0.796 ± 0.011} & \textbf{0.076 ± 0.017} \\
ConDA-GCL   & \textbf{0.629 ± 0.000} & \textbf{1.000 ± 0.000} & 0.738 ± 0.042 & 0.874 ± 0.020 & 0.125 ± 0.020 & 0.786 ± 0.032 & 0.085 ± 0.011 \\
\bottomrule
\end{tabular}
}}
\end{center}
\end{table}

\section{Results}
\textbf{Identifiability and Graph Recovery.}
Table~\ref{tab:causal-comp} summarizes latent-state identifiability and lag-1 graph
recovery across CausalVerse Physical Fall, CausalVerse Robotics Study, and
Microduck. On Physical Fall, ConDA-TCL and ConDA-GCL achieve near-perfect
block-wise recovery ($R^2\approx 1$) and exact graph recovery
(F1/AUPRC $=1$, SHD/SID $=0$), while iVAE attains the highest MCC
($0.714$). On the Robotics Study, ConDA-TCL and ConDA-GCL achieve the highest
MCC among the ConDA variants ($0.679$ and $0.683$) with $R^2=1.000$; iVAE
performs slightly better on the graph-recovery metrics, whereas ConDA-TCL/GCL
remain competitive. On Microduck, ConDA-GCL attains the highest MCC
($0.629$), while ConDA-TCL gives the strongest graph recovery among the
reported methods in F1 ($0.769$), accuracy ($0.888$), SHD ($0.111$), and
SID ($0.076$), with AUPRC comparable to iVAE. Overall, ConDA-TCL and
ConDA-GCL provide consistently strong performance across both state recovery
and lag-1 dependency recovery, although no single method dominates every
metric and dataset.

\begin{figure}[t]
    \centering
    \includegraphics[width=\textwidth]{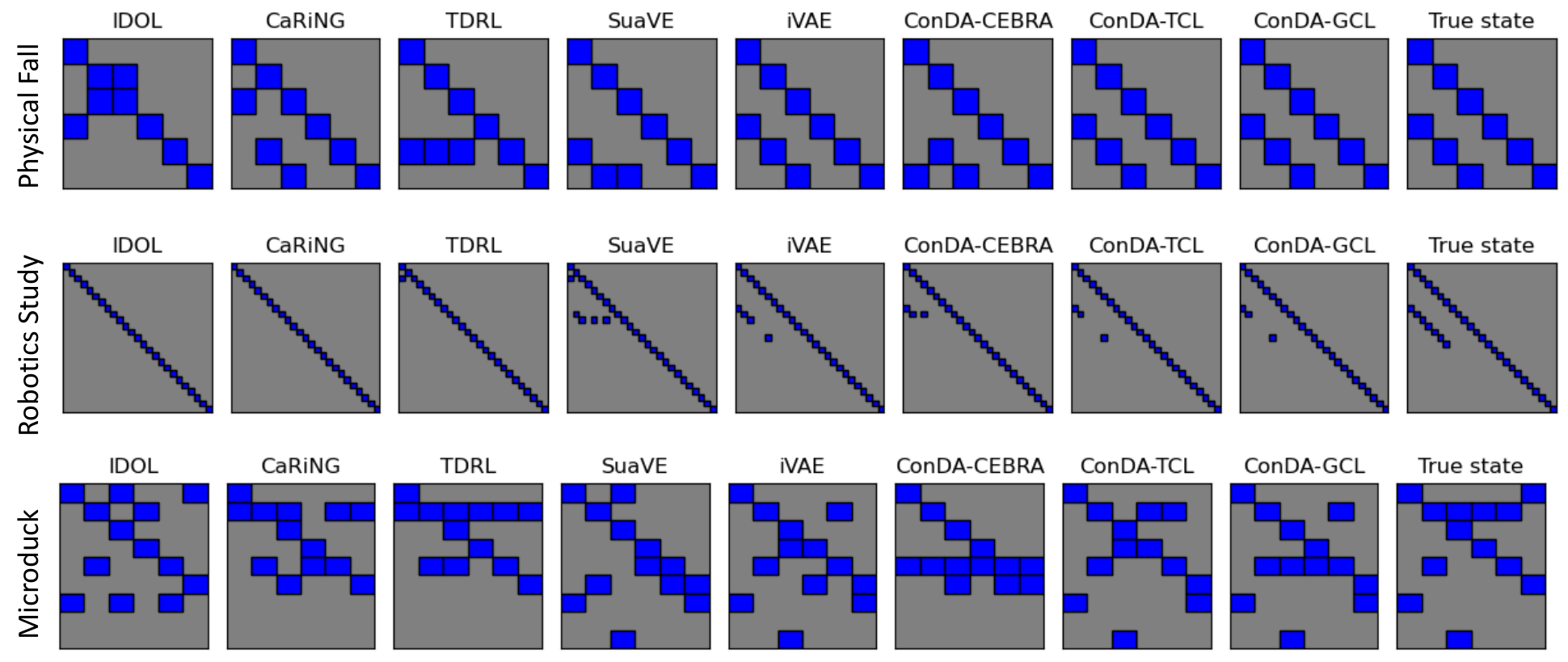}
    \caption{
\textbf{Recovered lag-1 dynamics graphs across datasets and methods.}
Binary adjacency matrices are shown for IDOL, CaRiNG, TDRL, temporal SuaVE, iVAE, ConDA-CEBRA, ConDA-TCL, ConDA-GCL, and the true-state reference. Blue entries indicate recovered edges and gray entries indicate absent edges. Rows correspond to the different dynamical systems considered in the study. The ConDA variants more closely reproduce the sparsity structure of the true-state dynamics, with TCL and GCL showing particularly strong agreement across settings.
}
    \label{fig:hinton}
\end{figure}

Figure~\ref{fig:hinton} visualizes the corresponding binary adjacency matrices.
The exact recovery obtained by ConDA-TCL and ConDA-GCL on Physical Fall is
clearly visible, while the Robotics and Microduck results show that the ConDA
variants preserve much of the sparse reference structure despite several
remaining false positive or missed edges.

\textbf{Block-wise vs.\ Coordinate-wise Recovery.}
Across the three datasets, ConDA-TCL and ConDA-GCL exhibit substantially
stronger block-wise $R^2$ than coordinate-wise MCC. This indicates that the
learned representations retain nearly all information needed to reconstruct the
underlying state, even when individual latent coordinates are not in one-to-one
correspondence with physical state variables. The graph-recovery results further
show that near-perfect block-wise recovery can be sufficient for accurate
recovery of lagged dependency structure, although exact coordinate alignment is
not required in every setting.

\textbf{Dynamic Intervention Effect.}
We further evaluated whether a physical-state readout of the learned representation predicts responses to held-out control perturbations. Figure~\ref{fig:microduck_die}
compares dynamic intervention effects predicted from the TCL-aligned
representation with matched MuJoCo rollouts and a true-state dynamics baseline.
For the backward-velocity intervention, $do(u_x=-0.25)$, TCL recovers
the correct sign and temporal trend, although the response magnitude is
attenuated relative to MuJoCo. For the lateral intervention,
$do(u_y=-0.17)$, TCL follows the simulator response more closely over the
rollout and captures the dominant negative effect on $v_y$. The true-state
baseline generally tracks the transient MuJoCo response more closely, whereas
the representation-based model underestimates the immediate effect. Overall,
the learned representation preserves intervention direction and much of the
longer-horizon dynamics, while exhibiting a smoother and attenuated short-term
response. Additional intervention results are provided in
Appendix~\ref{app:DTE-results}.

\begin{figure}[t]
    \centering
    \includegraphics[width=\textwidth]{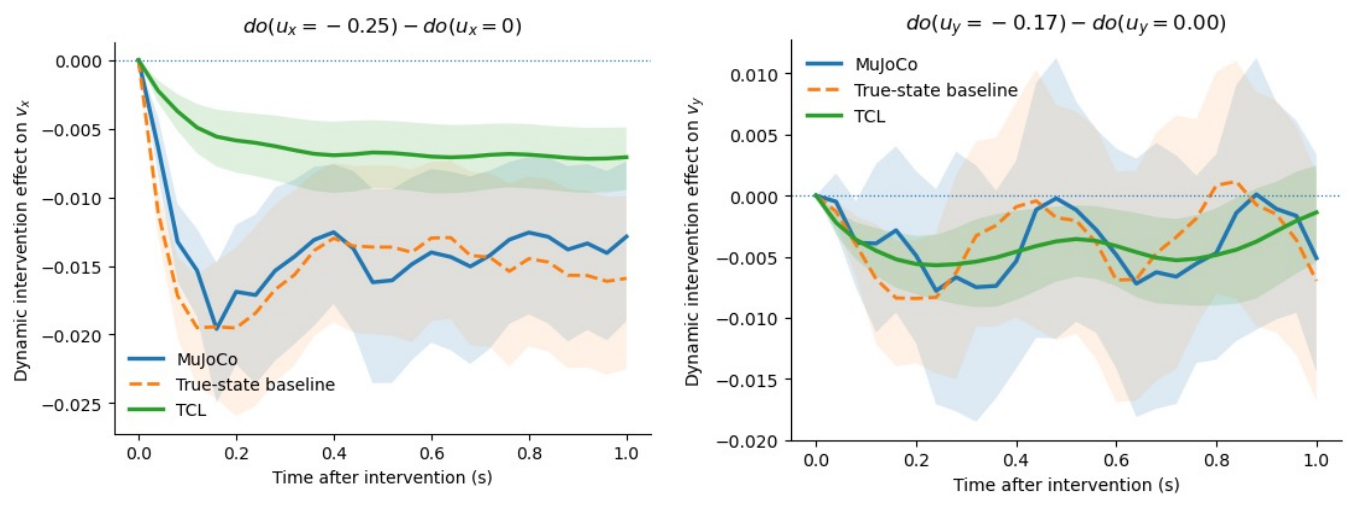}
    \caption{\textbf{Dynamic intervention effects under controlled Microduck perturbations.} Mean dynamic intervention effect over time for interventions on forward velocity command $u_x$ (left) and lateral velocity command $u_y$ (right). The curves compare the MuJoCo simulator response, the true-state dynamics baseline, and the TCL-aligned representation. Shaded regions denote 95\% bootstrap confidence intervals across matched intervention trajectories. The left panel shows the effect of $do(u_x=-0.25)$ relative to $do(u_x=0)$ on $v_x$, while the right panel shows the effect of $do(u_y=-0.17)$ relative to $do(u_y=0)$ on $v_y$.}
    \label{fig:microduck_die}
\end{figure}

\section{Conclusion and Limitations}
We studied when frozen pretrained diffusion representations can support identifiable and causally meaningful world models for high-dimensional dynamical systems. Building on ConDA, we showed that, under stated assumptions, TCL- or GCL-based alignment can recover latent dynamical states up to permutation and componentwise invertible transformations, preserve the factorization and lagged parent structure of the dynamic SCM, and characterize lagged causal dependencies through inverse transition-noise Jacobian sparsity. Empirically, TCL- and GCL-based variants achieved strong blockwise identifiability and supported lagged graph recovery across physical and robotic systems, while Microduck experiments showed preservation of intervention direction and temporal response structure under held-out controls. The theoretical guarantees are population-level and rely on informative auxiliary variables, separability, and regularity assumptions; empirical graph recovery also uses alignment to known state coordinates and, for forward-Jacobian estimation, additive-noise assumptions. Our intervention studies are limited to controlled simulation and open-loop perturbations. Extending the framework to partially observed, higher-order, nonstationary, and real-world scientific systems remains an important direction for future work.

% \bibliography{main}
% \bibliographystyle{plainnat}
\bibliography{main}

@article{blasingame2026rex,
title={Rex: A Family of Reversible Exponential (Stochastic) {Runge-Kutta} Solvers},
author={Zander W. Blasingame and Chen Liu},
journal={Forty-third International Conference on Machine Learning (ICML)},
year={2026},
}

@article{bengio2013representation,
  title={Representation learning: A review and new perspectives},
  author={Bengio, Yoshua and Courville, Aaron and Vincent, Pascal},
  journal={IEEE transactions on pattern analysis and machine intelligence},
  volume={35},
  number={8},
  pages={1798--1828},
  year={2013},
  publisher={IEEE}
}

@inproceedings{causalverse2025,
  title     = {{CausalVerse}: Benchmarking Causal Representation Learning with Configurable High-Fidelity Simulations},
  author    = {Guangyi Chen and Yunlong Deng and Peiyuan Zhu and Yan Li and Yifan Shen and Zijian Li and Kun Zhang},
  booktitle = {NeurIPS},
  year      = {2025},
  url       = {https://huggingface.co/CausalVerse}
}

@article{camps2023discovering,
  title={Discovering causal relations and equations from data},
  author={Camps-Valls, Gustau and Gerhardus, Andreas and Ninad, Urmi and Varando, Gherardo and Martius, Georg and Balaguer-Ballester, Emili and Vinuesa, Ricardo and Diaz, Emiliano and Zanna, Laure and Runge, Jakob},
  journal={Physics Reports},
  volume={1044},
  pages={1--68},
  year={2023},
  publisher={Elsevier}
}

@article{chen2024caring,
  title={{CaRiNG}: Learning temporal causal representation under non-invertible generation process},
  author={Chen, Guangyi and Shen, Yifan and Chen, Zhenhao and Song, Xiangchen and Sun, Yuewen and Yao, Weiran and Liu, Xiao and Zhang, Kun},
  journal={arXiv preprint arXiv:2401.14535},
  year={2024}
}

@inproceedings{chen2020simple,
  title={A simple framework for contrastive learning of visual representations},
  author={Chen, Ting and Kornblith, Simon and Norouzi, Mohammad and Hinton, Geoffrey},
  booktitle={International conference on machine learning (ICML)},
  pages={1597--1607},
  year={2020},
  organization={PmLR}
}

@article{granger1969investigating,
  title={Investigating causal relations by econometric models and cross-spectral methods},
  author={Granger, Clive WJ},
  journal={Econometrica: journal of the Econometric Society},
  pages={424--438},
  year={1969},
  publisher={JSTOR}
}

@article{ha2018world,
  title={World models},
  author={Ha, David and Schmidhuber, J{\"u}rgen},
  journal={arXiv preprint arXiv:1803.10122},
  volume={2},
  number={3},
  pages={440},
  year={2018}
}

@inproceedings{hafner2019learning,
  title={Learning latent dynamics for planning from pixels},
  author={Hafner, Danijar and Lillicrap, Timothy and Fischer, Ian and Villegas, Ruben and Ha, David and Lee, Honglak and Davidson, James},
  booktitle={International conference on machine learning (ICML)},
  pages={2555--2565},
  year={2019},
  organization={PMLR}
}

@article{hafner2020dream,
  title={Dream to control: Learning behaviors by latent imagination},
  author={Hafner, Danijar and Lillicrap, Timothy and Ba, Jimmy and Norouzi, Mohammad},
  journal={International Conference on Learning Representations (ICLR)},
  year={2020}
}

@article{hafner2023mastering,
  title={Mastering diverse domains through world models},
  author={Hafner, Danijar and Pasukonis, Jurgis and Ba, Jimmy and Lillicrap, Timothy},
  journal={arXiv preprint arXiv:2301.04104},
  year={2023}
}

@inproceedings{he2020momentum,
  title={Momentum contrast for unsupervised visual representation learning},
  author={He, Kaiming and Fan, Haoqi and Wu, Yuxin and Xie, Saining and Girshick, Ross},
  booktitle={Proceedings of the IEEE/CVF conference on computer vision and pattern recognition (CVPR)},
  pages={9729--9738},
  year={2020}
}

@article{ho2020denoising,
  title={Denoising diffusion probabilistic models},
  author={Ho, Jonathan and Jain, Ajay and Abbeel, Pieter},
  journal={Advances in neural information processing systems (NeurIPS)},
  volume={33},
  pages={6840--6851},
  year={2020}
}

@article{hyvarinen1999nonlinear,
  title={Nonlinear independent component analysis: Existence and uniqueness results},
  author={Hyv{\"a}rinen, Aapo and Pajunen, Petteri},
  journal={Neural networks},
  volume={12},
  number={3},
  pages={429--439},
  year={1999},
  publisher={Elsevier}
}

@article{hyvarinen2016unsupervised,
  title={Unsupervised feature extraction by time-contrastive learning and nonlinear {ICA}},
  author={Hyvarinen, Aapo and Morioka, Hiroshi},
  journal={NeurIPS},
  year={2016}
}

@inproceedings{hyvarinen2017nonlinear,
  title={Nonlinear {ICA} of temporally dependent stationary sources},
  author={Hyvarinen, Aapo and Morioka, Hiroshi},
  booktitle={Artificial intelligence and statistics},
  pages={460--469},
  year={2017},
  organization={PMLR}
}

@inproceedings{hyvarinen2019nonlinear,
  title={Nonlinear {ICA} using auxiliary variables and generalized contrastive learning},
  author={Hyvarinen, Aapo and Sasaki, Hiroaki and Turner, Richard},
  booktitle={The 22nd international conference on artificial intelligence and statistics},
  pages={859--868},
  year={2019},
  organization={PMLR}
}

@inproceedings{khemakhem2020variational,
  title={Variational autoencoders and nonlinear {ICA}: A unifying framework},
  author={Khemakhem, Ilyes and Kingma, Diederik and Monti, Ricardo and Hyvarinen, Aapo},
  booktitle={International conference on artificial intelligence and statistics},
  pages={2207--2217},
  year={2020},
  organization={PMLR}
}

@inproceedings{locatello2019challenging,
  title={Challenging common assumptions in the unsupervised learning of disentangled representations},
  author={Locatello, Francesco and Bauer, Stefan and Lucic, Mario and Raetsch, Gunnar and Gelly, Sylvain and Sch{\"o}lkopf, Bernhard and Bachem, Olivier},
  booktitle={international conference on machine learning (ICML)},
  pages={4114--4124},
  year={2019},
  organization={PMLR}
}

@article{li2024identification,
  title={On the identification of temporally causal representation with instantaneous dependence},
  author={Li, Zijian and Shen, Yifan and Zheng, Kaitao and Cai, Ruichu and Song, Xiangchen and Gong, Mingming and Chen, Guangyi and Zhang, Kun},
  journal={International Conference on Learning Representations (ICLR)},
  year={2025}
}

@article{liu2026identifying,
  title={Identifying weight-variant latent causal models},
  author={Liu, Yuhang and Zhang, Zhen and Gong, Dong and Gong, Mingming and Huang, Biwei and van den Hengel, Anton and Zhang, Kun and Shi, Javen Qinfeng},
  journal={Journal of Machine Learning Research},
  volume={27},
  number={4},
  pages={1--49},
  year={2026}
}

@misc{microduck_rl,
  author       = {{Pollen Robotics}},
  title        = {Microduck RL: Reinforcement Learning Environments for Microduck},
  year         = {2026},
  howpublished = {\url{https://github.com/pollen-robotics/microduck_rl}},
}

@inproceedings{todorov2012mujoco,
  title={MuJoCo: A physics engine for model-based control},
  author={Todorov, Emanuel and Erez, Tom and Tassa, Yuval},
  booktitle={2012 IEEE/RSJ International Conference on Intelligent Robots and Systems},
  pages={5026--5033},
  year={2012},
  organization={IEEE},
  doi={10.1109/IROS.2012.6386109}
}

@article{oord2018representation,
  title={Representation learning with contrastive predictive coding},
  author={Oord, Aaron van den and Li, Yazhe and Vinyals, Oriol},
  journal={arXiv preprint arXiv:1807.03748},
  year={2018}
}

@book{pearl2009causality,
  title={Causality},
  author={Pearl, Judea},
  year={2009},
  publisher={Cambridge university press}
}

@book{peters2017elements,
  title={Elements of causal inference: foundations and learning algorithms},
  author={Peters, Jonas and Janzing, Dominik and Scholkopf, Bernhard},
  year={2017},
  publisher={MIT press}
}

@inproceedings{rombach2022high,
  title={High-resolution image synthesis with latent diffusion models},
  author={Rombach, Robin and Blattmann, Andreas and Lorenz, Dominik and Esser, Patrick and Ommer, Bj{\"o}rn},
  booktitle={Proceedings of the IEEE/CVF conference on computer vision and pattern recognition (CVPR)},
  pages={10684--10695},
  year={2022}
}

@article{runge2019detecting,
  title={Detecting and quantifying causal associations in large nonlinear time series datasets},
  author={Runge, Jakob and Nowack, Peer and Kretschmer, Marlene and Flaxman, Seth and Sejdinovic, Dino},
  journal={Science advances},
  volume={5},
  number={11},
  pages={eaau4996},
  year={2019},
  publisher={American Association for the Advancement of Science}
}

@article{sandilya2026contrastivediffusionalignmentlearning,
      title={{Contrastive Diffusion Alignment}: Learning Structured Latents for Controllable Generation}, 
      author={Ruchi Sandilya and Sumaira Perez and Charles Lynch and Lindsay Victoria and Benjamin Zebley and Derrick Matthew Buchanan and Mahendra T. Bhati and Nolan Williams and Timothy J. Spellman and Faith M. Gunning and Conor Liston and Logan Grosenick},
      journal={Forty-third International Conference on Machine Learning (ICML)},
      year={2026}, 
}

@article{schneider2023cebra,
  author = {Schneider, Steffen and Lee, Jin H and Mathis, Mackenzie W},
  title = {Learnable latent embeddings for joint behavioural and neural analysis},
  journal = {Nature},
  doi = {https://doi.org/10.1038/s41586-023-06031-6},
  year = {2023},
}

@article{scholkopf2021toward,
  title={Toward causal representation learning},
  author={Sch{\"o}lkopf, Bernhard and Locatello, Francesco and Bauer, Stefan and Ke, Nan Rosemary and Kalchbrenner, Nal and Goyal, Anirudh and Bengio, Yoshua},
  journal={Proceedings of the IEEE},
  volume={109},
  number={5},
  pages={612--634},
  year={2021},
  publisher={IEEE}
}

@inproceedings{sohl2015deep,
  title={Deep unsupervised learning using nonequilibrium thermodynamics},
  author={Sohl-Dickstein, Jascha and Weiss, Eric and Maheswaranathan, Niru and Ganguli, Surya},
  booktitle={International conference on machine learning (ICML)},
  pages={2256--2265},
  year={2015},
  organization={pmlr}
}

@article{song2021score,
  title={Score-based generative modeling through stochastic differential equations},
  author={Song, Yang and Sohl-Dickstein, Jascha and Kingma, Diederik P and Kumar, Abhishek and Ermon, Stefano and Poole, Ben},
  journal={International Conference on Learning Representations (ICLR)},
  year={2021}
}

@book{spirtes2000causation,
  title={Causation, prediction, and search},
  author={Spirtes, Peter and Glymour, Clark N and Scheines, Richard},
  year={2000},
  publisher={MIT press}
}

@article{sun2025causal,
  title={Causal Representation Learning from Multi-modal Biomedical Observations},
  author={Sun, Yuewen and Kong, Lingjing and Chen, Guangyi and Li, Loka and Luo, Gongxu and Li, Zijian and Zhang, Yixuan and Zheng, Yujia and Yang, Mengyue and Stojanov, Petar and others},
  journal={International Conference on Learning Representations (ICLR)},
  pages={arXiv--2411},
  year={2025}
}

@article{tank2021neural,
  title={Neural granger causality},
  author={Tank, Alex and Covert, Ian and Foti, Nicholas and Shojaie, Ali and Fox, Emily B},
  journal={IEEE Transactions on Pattern Analysis and Machine Intelligence},
  volume={44},
  number={8},
  pages={4267--4279},
  year={2021},
  publisher={IEEE}
}

@article{yao2022temporally,
  title={Temporally disentangled representation learning},
  author={Yao, Weiran and Chen, Guangyi and Zhang, Kun},
  journal={NeurIPS},
  volume={35},
  pages={26492--26503},
  year={2022}
}

@article{reizinger2023jacobian,
  title={Jacobian-based causal discovery with nonlinear {ICA}},
  author={Reizinger, Patrik and Sharma, Yash and Bethge, Matthias and Sch{\"o}lkopf, Bernhard and Husz{\'a}r, Ferenc and Brendel, Wieland},
  journal={Transactions on Machine Learning Research (TMLR)},
  year={2023}
}

@article{yao2024marrying,
  title={Marrying causal representation learning with dynamical systems for science},
  author={Yao, Dingling and Muller, Caroline and Locatello, Francesco},
  journal={Advances in Neural Information Processing Systems (NeurIPS)},
  volume={37},
  pages={71705--71736},
  year={2024}
}
\bibliographystyle{iclr2027_conference}

\newpage
\section*{Datasets and Code Access Information}
All code used to generate the results presented in this work will be released as open source under the MIT License. The benchmark dataset is publicly available through Hugging Face at \url{https://huggingface.co/CausalVerse}
\appendix
\section{Appendix}
\subsection{Identifiability of Latent States from Auxiliary-Variable Nonlinear ICA}\label{app:nonlinear-ica-prrof}
\begin{proof}[Proof Sketch of Theorem \ref{thm:nonlinear-ica}]
    Under conditional independence and exponential-family assumptions, the latent variables are identifiable up to permutation and componentwise invertible transformations \cite{hyvarinen2016unsupervised, hyvarinen2019nonlinear, khemakhem2020variational}.
\end{proof}

\subsection{Contrastive Learning as Density Ratio Estimation}
\begin{proof}[Proof of Lemma \ref{lemma-density-ratio}]
At the population level, the supervised contrastive objective \eqref{eq:infonce} can be interpreted
as a classification problem that distinguishes matched pairs drawn from
\(p_{\mathrm{pos}}\) from mismatched pairs drawn from \(p_{\mathrm{neg}}\).
For a sufficiently expressive score class and in the large-negative-sample
limit, the Bayes-optimal score is the log-likelihood ratio between the positive
and negative-pair distributions, up to an additive term depending only on the
conditioning variable \(u_t\).
Therefore,
\[
r^*(x_t,x_t^+,u_t)
=
\log
\frac{
p_{\mathrm{pos}}(x_t,x_t^+\mid u_t)
}{
p_{\mathrm{neg}}(x_t,x_t^+\mid u_t)
}
+
C(u_t).
\]
Substituting
\(p_{\mathrm{neg}}(x_t,x_t^+\mid u_t)=p(x_t\mid u_t)p(x_t^+)\)
gives the stated expression.
\end{proof}

% \paragraph{Remark.}
% This result shows that supervised contrastive learning with matched auxiliary variables reduces, at the population level, to density-ratio estimation between matched and mismatched pairs. This provides the key link between contrastive representation learning and identifiable latent-variable models with auxiliary variables.

\subsection{Identifiability Through Frozen Diffusion Representations}
% \begin{theorem}
% Assume:
% \begin{enumerate}
%     \item The latent process satisfies the assumptions of Theorem 1.
    
%     \item The supervised contrastive objective satisfies the assumptions of Lemma 1.
    
%     \item The population-optimal density-ratio score
%     \begin{align}
%         \log \frac{p_{\mathrm{pos}}(x_t,x_t^+\mid u_t)}{p_{\mathrm{neg}}(x_t,x_t^+\mid u_t)}
%     \end{align}
%     admits a coordinate-separable representation through the true latent coordinates, i.e. it can be written (up to additive terms) as a sum of functions of individual latent components.
    
%     \item The learned score admits a coordinate-separable parametrization through an encoder $h=(h_1,\dots,h_d)$:
%     \begin{align}
%         r_\theta(x_t,x_t^+,u_t)
%         =
%         \sum_{i=1}^d \psi_i\bigl(h_i(x_t),h_i(x_t^+),u_t\bigr),
%     \end{align}
%     and the encoder class is sufficiently rich to realize such representations.
% \end{enumerate}

% Then any global population-optimal representation $z_t = h(x_t)$ identifies the latent state $s_t$ up to permutation and componentwise invertible scalar transformations: there exist a permutation $\pi$ and invertible scalar functions $\phi_i$ such that
% \begin{align}
%     z_{i,t} = \phi_i(s_{\pi(i),t}), \qquad i=1,\dots,d.
% \end{align}

% If the coordinates are ordered by prior knowledge or an auxiliary ordering signal, the permutation ambiguity is removed.
% \end{theorem}

\begin{proof}[Proof Sketch of Theorem \ref{thm:identifiability}]
Because $g$ has a smooth inverse on its image, observing $x_t=g(s_t)$
preserves the state information available to the specified TCL or GCL procedure. Under assumptions (2)--(3), the applicable identification
result yields its stated equivalence class when applied to $x_t$.
If that class is coordinatewise after any required unmixing, its output
satisfies Eq.~(10).

%By Assumption 5, the learned score is constrained to have a compatible coordinate-separable representation through the encoder \(h\), and the model class is sufficiently rich to realize the population-optimal score. Therefore, at a global population optimum, the learned coordinates \(h_i(x_t)\) must realize the same separable latent structure as the true coordinates \(s_{i,t}\), up to the standard ambiguities of auxiliary-variable nonlinear ICA.

%Applying Theorem~\ref{thm:nonlinear-ica}, the recovered coordinates are therefore identified up to permutation and componentwise invertible scalar transformations. Hence, there exist a permutation \(\pi\) and invertible scalar functions \(\phi_i\) such that
%\[
%z_{i,t}
%=
%\phi_i(s_{\pi(i),t}),
%\qquad i=1,\ldots,d.
%\]
\end{proof}

\subsection{Causal Representation Preservation}
% The learned representation preserves the latent SCM:
% \begin{theorem}
% Assume the latent dynamics satisfy
% \begin{align}
%     s_{i,t+1} = f_i(s_{\mathrm{Pa}(i),t}, s_{i,t}, \varepsilon_{i,t}, u_t),
% \end{align}
% where
% \begin{enumerate}
%     \item the graph $\mathcal{G}$ over latent coordinates is a DAG;
%     \item the noises $\varepsilon_{i,t}$ are mutually independent across $i$, conditional on $u_t$;
%     \item each $f_i$ is smooth and injective in $\varepsilon_{i,t}$;
%     \item the learned representation $z_t = h(x_t)$ is identifiable up to permutation and componentwise invertible scalar transformations as in Theorem 2.
% \end{enumerate}
% Then, after resolving the permutation ambiguity, the learned representation is a causal representation in the sense that there exist invertible scalar functions $\phi_i$ such that
% \begin{align}
%     z_{i,t} = \phi_i(s_{i,t}),
% \end{align}
% and the transition model factorizes over the same parent sets:
% \begin{align}
%     z_{i,t+1}
%     =
%     \tilde f_i(z_{\mathrm{Pa}(i),t}, z_{i,t}, \eta_{i,t}, u_t),
% \end{align}
% for suitable transformed mechanisms $\tilde f_i$ and, if needed, componentwise reparameterized noises $\eta_{i,t}$ that remain mutually independent across $i$ conditional on $u_t$.
% \end{theorem}
% Therefore, identifiable coordinates are not merely useful latents; they preserve the latent causal factorization.
\begin{proof}[Proof Sketch of Theorem \ref{thm:causal-rep}]
By Theorem \ref{thm:identifiability}, after resolving the permutation ambiguity there exist invertible scalar functions $\phi_i$ such that
\[
z_{i,t}=\phi_i(s_{i,t}), \qquad i=1,\dots,d.
\]
Substituting $s_{i,t}=\phi_i^{-1}(z_{i,t})$ into the latent SCM
\[
s_{i,t+1}=f_i(s_{\mathrm{Pa}(i),t},s_{i,t},\varepsilon_{i,t},u_t)
\]
yields
\[
z_{i,t+1}
=
\phi_i\!\left(
f_i\bigl(\phi_{\mathrm{Pa}(i)}^{-1}(z_{\mathrm{Pa}(i),t}),
\phi_i^{-1}(z_{i,t}),
\varepsilon_{i,t},u_t
\bigr)
\right).
\]
This defines a transformed mechanism $\tilde f_i$ depending on the same parent coordinates. Because the transformation is componentwise and invertible, it does not mix coordinates and therefore preserves the parent sets. Independence of the noise coordinates is preserved under componentwise reparameterization, yielding the stated transformed SCM.   
\end{proof}

\subsection{Jacobian-Based Graph Recovery}\label{app-jacobian-proof}
% The sparsity pattern of $\frac{\partial \eta_{i,t}}{\partial z_{j,t}}$ recovers the directed causal graph almost everywhere.
% \begin{theorem}
% For the transformed structural model
% \begin{align}
%     z_{i,t+1}
%     =
%     \tilde f_i(z_{\mathrm{Pa}(i),t}, z_{i,t}, \eta_{i,t}, u_t),
% \end{align}
% let $T=(T_1,\dots,T_d)$ denote the inverse transition-noise map defined by
% \begin{align}
%     \eta_t = T(z_t,z_{t+1},u_t).
% \end{align}
% Assume:
% \begin{enumerate}
%     \item the representation $z_t$ is identified up to componentwise invertible transformations and permutation;
%     \item after resolving permutation, the dynamic SCM is acyclic in the coordinate index ordering;
%     \item each mechanism $\tilde f_i$ is smooth and bijective in $\eta_{i,t}$;
%     \item the inverse map $T$ is differentiable almost everywhere on the support;
%     \item (structural faithfulness) for every $j\in \mathrm{Pa}(i)$, the dependence of $T_i(z_t,z_{t+1},u_t)$ on $z_{j,t}$ does not vanish almost everywhere; equivalently,
%     \[
%     \frac{\partial \eta_{i,t}}{\partial z_{j,t}}
%     \neq 0
%     \quad \text{on a set of positive measure.}
%     \]
% \end{enumerate}
% Then, for almost every $(z_t,z_{t+1},u_t)$ in the support,
% \begin{align}
%     \frac{\partial \eta_{i,t}}{\partial z_{j,t}} = 0
%     \quad \Longleftrightarrow \quad
%     j \notin \mathrm{Pa}(i).
% \end{align}
% Hence the sparsity pattern of the Jacobian of the inverse transition-noise map with respect to $z_t$ recovers the directed edges of the latent dynamic causal graph.
% \end{theorem}
\begin{proof}[Proof Sketch of Theorem \ref{thm:graph-recovery}]
Because each mechanism $\tilde f_i$ is bijective in its own noise coordinate $\eta_{i,t}$, the structural equations can be inverted to express each $\eta_{i,t}$ as a function $T_i(z_t,z_{t+1},u_t)$. By construction, $T_i$ depends only on those coordinates of $z_t$ that appear in the corresponding structural equation for $z_{i,t+1}$, namely $z_{i,t}$ and $z_{j,t}$ for $j\in \mathrm{Pa}(i)$. Thus if $j\notin \mathrm{Pa}(i)$, then $\eta_{i,t}$ does not depend on $z_{j,t}$ and
\[
\frac{\partial \eta_{i,t}}{\partial z_{j,t}}=0.
\]
Conversely, if $j\in \mathrm{Pa}(i)$, then by structural faithfulness this dependence does not cancel almost everywhere, so
\[
\frac{\partial \eta_{i,t}}{\partial z_{j,t}}\neq 0
\]
on a set of positive measure. This establishes the equivalence.    
\end{proof}

% \begin{corollary}
% Under the assumptions of Theorems 2, 3, and 4, the learned representation $z_t$ identifies the latent coordinates up to componentwise invertible scalar transformations and permutation, preserves the dynamic SCM factorization after permutation resolution, and therefore reduces recovery of the dynamic causal graph to recovery of the Jacobian sparsity pattern of the inverse transition-noise map. Hence the dynamic causal graph is identifiable almost everywhere.
% \end{corollary}

\subsection{Causal Graph for Physical Fall}\label{app:physical-fall}
We first evaluate our framework on the Physical Fall video dataset from
CausalVerse~\citep{causalverse2025}. This dataset simulates objects undergoing
free-fall motion under gravity. The system evolves in discrete time, and the
underlying state variables describe the object's position and velocity.

For each spatial coordinate, the temporal transition dynamics are given by
\begin{align*}
v_{t+1} &= v_t + g\Delta t, \\
h_{t+1} &= h_t + v_t \Delta t + \tfrac{1}{2}g(\Delta t)^2, 
\end{align*}
where \(h_t\) denotes position or height at time \(t\), \(v_t\) denotes velocity,
\(g\) is gravitational acceleration, and \(\Delta t\) is the simulation time
step. For the three-dimensional setting, the true latent state is
\begin{equation*}
    s_t =
    \left[
    h_{1,t}, h_{2,t}, h_{3,t},
    v_{1,t}, v_{2,t}, v_{3,t}
    \right].
\end{equation*}
The corresponding dynamic causal graph contains three types of directed
time-lagged dependencies:
\begin{align*}
    h_t \rightarrow h_{t+1}, \quad
    v_t \rightarrow v_{t+1}, \quad
    v_t \rightarrow h_{t+1}.
\end{align*}
Thus, each future position depends on its current position and velocity, while
each future velocity depends on its current velocity. We use gravity and time as auxiliary variables. Because the transition mechanism is known analytically, the Physical Fall setting provides a controlled benchmark for evaluating both source identifiability, through alignment between
the learned representations and the true latent states, and dynamic causal graph recovery, through comparison of the estimated lagged adjacency matrix with the known graph.

\subsection{Physics-informed reference lag-1 dynamics graph for Robotics Study}\label{app:robotics-study}
We then evaluate our framework on the Robotics video dataset from CausalVerse~\citep{causalverse2025}, which consists of simulated robotic manipulation scenes in which a robot interacts with objects over time. Compared with the Physical Fall setting, this dataset involves a higher-dimensional state space and more complex temporal dependencies arising from joint motion, end-effector dynamics, and gripper motion.

For each video, we construct a 25-dimensional robot state comprising joint positions and velocities, end-effector position and orientation, and gripper position and velocity:
\begin{equation*}
s_t = \left[ q_t,\, \dot q_t,\, p_t^{\mathrm{eef}},\, Q_t^{\mathrm{eef}},\, g_t,\, \dot g_t \right],
\end{equation*}
where \(q_t,\dot q_t\in\mathbb{R}^{7}\) denote the robot joint positions and velocities, \(p_t^{\mathrm{eef}}\in\mathbb{R}^{3}\) denotes the end-effector position, \(Q_t^{\mathrm{eef}}\in\mathbb{R}^{4}\) its quaternion orientation, and \(g_t,\dot g_t\in\mathbb{R}^{2}\) denote the gripper position and velocity, respectively.

We define a physics-informed reference lag-1 dynamics graph in which every state variable has a self-transition and each joint velocity additionally influences the corresponding joint position at the next time step:
\begin{align*}
q_{k,t}\rightarrow q_{k,t+1},\qquad \dot q_{k,t}\rightarrow q_{k,t+1},\qquad \dot q_{k,t}\rightarrow \dot q_{k,t+1}, \quad k=1,\ldots,7,
\end{align*}
together with
\begin{align*}
p^{\mathrm{eef}}_{j,t}\rightarrow p^{\mathrm{eef}}_{j,t+1},\qquad Q^{\mathrm{eef}}_{j,t}\rightarrow Q^{\mathrm{eef}}_{j,t+1},\qquad g_{j,t}\rightarrow g_{j,t+1},\qquad \dot g_{j,t}\rightarrow \dot g_{j,t+1}.
\end{align*}
This yields 25 self-lag edges and seven cross-lag edges, for a total of 32 reference dynamic edges. We use time, object position and object velocity as auxiliary information for representation learning.

\subsection{Controlled Causal Dynamics for Microduck}\label{app:microduck}
We further evaluate our framework on Microduck, a bipedal robotic locomotion environment implemented using the official Microduck RL
framework~\citep{microduck_rl} and simulated in MuJoCo~\citep{todorov2012mujoco}.
Unlike the Physical Fall and CausalVerse Robotics settings, Microduck explicitly incorporates externally imposed control commands, allowing us to evaluate whether learned representations capture not only the underlying physical state but also its response to interventions.

For each trajectory, we define a six-dimensional physical state
\begin{equation*}
s_t = \left[v_{x,t}, v_{y,t}, \omega_{z,t}, h_t, \phi_t, \theta_t \right],
\end{equation*}
where \(v_{x,t}\) and \(v_{y,t}\) denote the forward and lateral body velocities, \(\omega_{z,t}\) is the body yaw rate, \(h_t\) is the trunk height, and \(\phi_t\) and \(\theta_t\) denote body roll and pitch, respectively. The robot is driven by a three-dimensional external command
\begin{equation*}
u_t = \left[u_{x,t}, u_{y,t}, u_{\omega,t} \right],    
\end{equation*}
corresponding to commanded forward velocity, lateral velocity, and yaw rate. The command \(u_t\) is applied before recording the state at time \(t\) and is held fixed while the simulator advances to \(t+1\). We therefore model the controlled transition mechanism as $ s_{t+1}=F(s_t,u_t)$,
and recover lagged dependencies of the form $ [s_t,u_t]\rightarrow s_{t+1}$. This controlled formulation is also the basis for comparing graph recovery in the true simulator state and in the learned representation.

The control variables have direct physical interpretations. In particular, commanded forward and lateral velocities are expected to influence the corresponding future body velocities, while the commanded yaw rate influences the future body yaw rate:
\begin{align*}
u_{x,t}\rightarrow v_{x,t+1},\qquad u_{y,t}\rightarrow v_{y,t+1},\qquad u_{\omega,t}\rightarrow \omega_{z,t+1}.
\end{align*}
The remaining state variables interact through the robot's locomotion dynamics, including coupling between translational velocity, body attitude, and trunk height. Rather than imposing an arbitrary sparse causal graph, we determine the reference lag-1 structure from the simulator state dynamics and controlled perturbations, using graph recovery from the true state as an oracle benchmark.

\subsection{Evaluation Metric Definitions}
\label{app:metrics}
To evaluate identifiability and causal graph recovery, we use both component-level and block-level metrics.
%To evaluate both component-wise and block-wise identifiability, we use Mean Correlation Coefficient (MCC) and the coefficient of determination $R^2$. Let $s_t \in \mathbb{R}^{d}$ denote the ground-truth state at time $t$ and let $z_t \in \mathbb{R}^{k}$ denote the learned representation. We stack the states over $T$ time points as $ S = [s_1,\ldots,s_T]^\top$ and $Z = [z_1,\ldots,z_T]^\top$. 
Let $s_t \in \mathbb{R}^{d}$ denote the ground-truth state at time $t$, and let $z_t \in \mathbb{R}^{k}$ denote the learned representation. For a trajectory of length $T$, we stack these quantities as $ S = [s_1,\ldots,s_T]^\top$ and $Z = [z_1,\ldots,z_T]^\top$.

\textbf{Mean Correlation Coefficient (MCC).} We use MCC to measure component-wise identifiability between the ground-truth state dimensions and the learned representation dimensions. We first compute the absolute Pearson correlation matrix $ R_{ij} = \left| \operatorname{corr} \left(S_{i}, Z_{j} \right) \right|$, then select an injective matching $\pi:\{1,\ldots,d\}\rightarrow\{1,\ldots,k\}$ maximizing the total matched correlation: $\pi^\star=\arg\max_{\pi}\sum_{i=1}^{d} R_{i,\pi(i)}$.
Finally, the MCC value is defined as $\operatorname{MCC}(S,Z) = \frac{1}{d} \sum_{i=1}^{d} R_{i,\pi^\star(i)}$.

\textbf{Coefficient of Determination ($R^2$).} 
Block-wise $R^2$ measures the proportion of variance in a ground-truth state block
$s_b$ that can be explained by the learned representation $z$. Formally, $R^2=1 -\frac{\operatorname{Var}\left(s_b - \operatorname{Reg}(z)\right)
}{\operatorname{Var}(s_b)}$, where $\operatorname{Reg}$ is a regression function, either linear or nonlinear, trained to predict
the ground-truth state block $s_b$ from the learned representation $z$. A value of $R^2 = 1$ indicates perfect block-wise predictability, meaning that the state block can be fully recovered from the learned representation.

\textbf{Causal Graph Recovery.} 
For causal graph recovery, we compare the estimated adjacency matrix $\hat{B}$ with the ground-truth adjacency matrix $B$, where $B_{ij}=1$ denotes a directed lagged edge from variable $i$ at time $t$ to variable $j$ at time $t+1$. For threshold-based metrics, $\hat{B}$ is binarized and we compute true positives (TP), false positives (FP), false negatives (FN), and true negatives (TN) over all candidate edges. We report
\[\mathrm{Accuracy}
=
\frac{\mathrm{TP}+\mathrm{TN}}
{\mathrm{TP}+\mathrm{FP}+\mathrm{FN}+\mathrm{TN}},
\qquad
\mathrm{Precision}
=
\frac{\mathrm{TP}}
{\mathrm{TP}+\mathrm{FP}},
\]
\[
\mathrm{Recall}
=
\frac{\mathrm{TP}}
{\mathrm{TP}+\mathrm{FN}},
\qquad
\mathrm{F1}
=
\frac{2\,\mathrm{Precision}\,\mathrm{Recall}}
{\mathrm{Precision}+\mathrm{Recall}}.
\]
To evaluate graph recovery independently of a single threshold, we additionally report the area under the precision--recall curve (AUPRC), computed from the continuous edge scores. We report structural Hamming distance (SHD), which quantifies structural disagreement between $\hat{B}$ and $B$ through missing, extra, and incorrectly oriented edges; lower values indicate better recovery, with $\mathrm{SHD}=0$ corresponding to exact structural recovery. We also report structural intervention distance (SID), which measures disagreement between the estimated and ground-truth graphs in the interventional distributions implied by their graph structures; lower SID indicates greater agreement in causal effects, with $\mathrm{SID}=0$ indicating no structural intervention errors. We report SHD and SID normalized by the number of candidate edges. Figure~\ref{fig:hinton} shows the binarized adjacency matrices, with recovered edges in blue.

\subsection{Dynamic Intervention Effect}\label{app:DTE}
To assess whether the learned representation supports prediction under
controlled interventions, we perform matched interventional rollouts in
the Microduck environment. For each of 100 trajectories, the simulator is
reconstructed to the same state at intervention time $t=50$. We then compare $do(u_x=\pm 0.25)$ and $do(u_y=\pm 0.17)$ with the corresponding zero-command baseline, holding the remaining commands at zero. The intervention values lie within the
empirical support of the training control distribution
(Fig.~\ref{fig:microduck_control_distribution}). 

\begin{figure}[t]
    \centering
    \includegraphics[width=\textwidth]{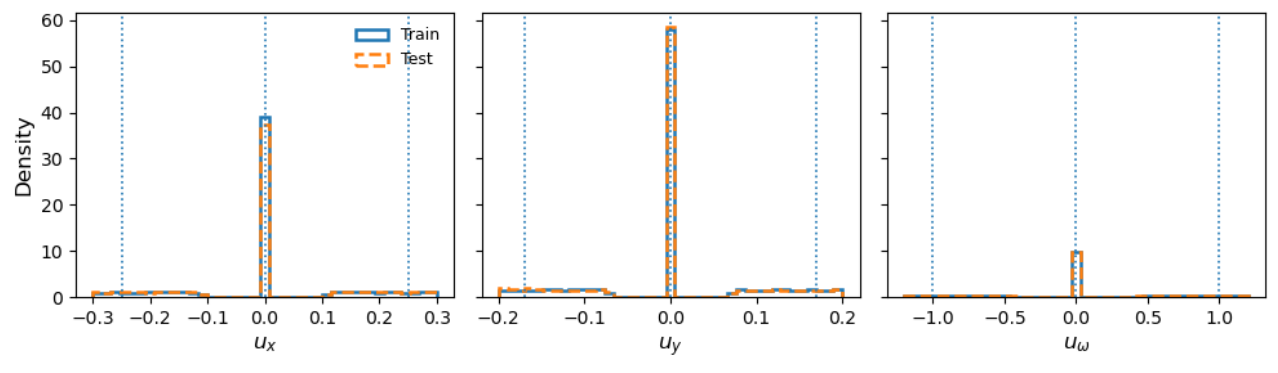}
    \caption{
    \textbf{Distribution of Microduck control commands.}
    Empirical distributions of the forward-velocity command $u_x$,
    lateral-velocity command $u_y$, and yaw-rate command $u_\omega$
    in the training and test sets. The distributions closely overlap
    across the two splits. Dotted vertical lines indicate the negative,
    zero, and positive command values used in the intervention experiments.
    }
    \label{fig:microduck_control_distribution}
\end{figure}

Each intervention is
propagated for 25 simulation steps, corresponding to $1\,\mathrm{s}$ at
25~Hz.
For a command $u_c\in\{u_x,u_y\}$ and state component $s_j$, we define the dynamic intervention effect (DIE) at horizon $k$ as
\begin{equation}
\mathrm{DIE}_j(k;a,b) = \mathbb{E}_i \left[s^{do(u_c=a)}_{i,t+k,j} - s^{do(u_c=b)}_{i,t+k,j} \right],
\end{equation}
where the expectation is over matched trajectories $i$. We evaluate the direct effects of $u_x$ on $v_x$ and of $u_y$ on $v_y$.

We compare the MuJoCo simulator response with two learned dynamics models:
(i) a true-state baseline trained directly on the physical state, $[S_t,U_t] \longrightarrow S_{t+1}$,
and (ii) a representation-based model in which the learned latent representation is first aligned to the physical state, $Z_t \longrightarrow \hat S_t$,
after which the controlled dynamics are modeled as $[\hat S_t,U_t] \longrightarrow \hat S_{t+1}$.

For a rollout horizon $H$ and target velocity $v\in\{v_x,v_y\}$, we quantify intervention prediction error as
\begin{align}
\mathrm{RMSE}_{H} = \sqrt{
\frac{1}{H}
\sum_{k=1}^{H}
\left(
\widehat{\mathrm{DIE}}_{v}(k)
-
\mathrm{DIE}^{\mathrm{sim}}_{v}(k)
\right)^2
},
\end{align}
and report the corresponding relative error
\begin{align}
\mathrm{RelErr}_{H}
=
\frac{\mathrm{RMSE}_{H}}
{
\sqrt{
\frac{1}{H}
\sum_{k=1}^{H}
\left(
\mathrm{DIE}^{\mathrm{sim}}_{v}(k)
\right)^2
}
}.
\end{align}

\begin{figure}[t]
    \centering
    \includegraphics[width=\linewidth]{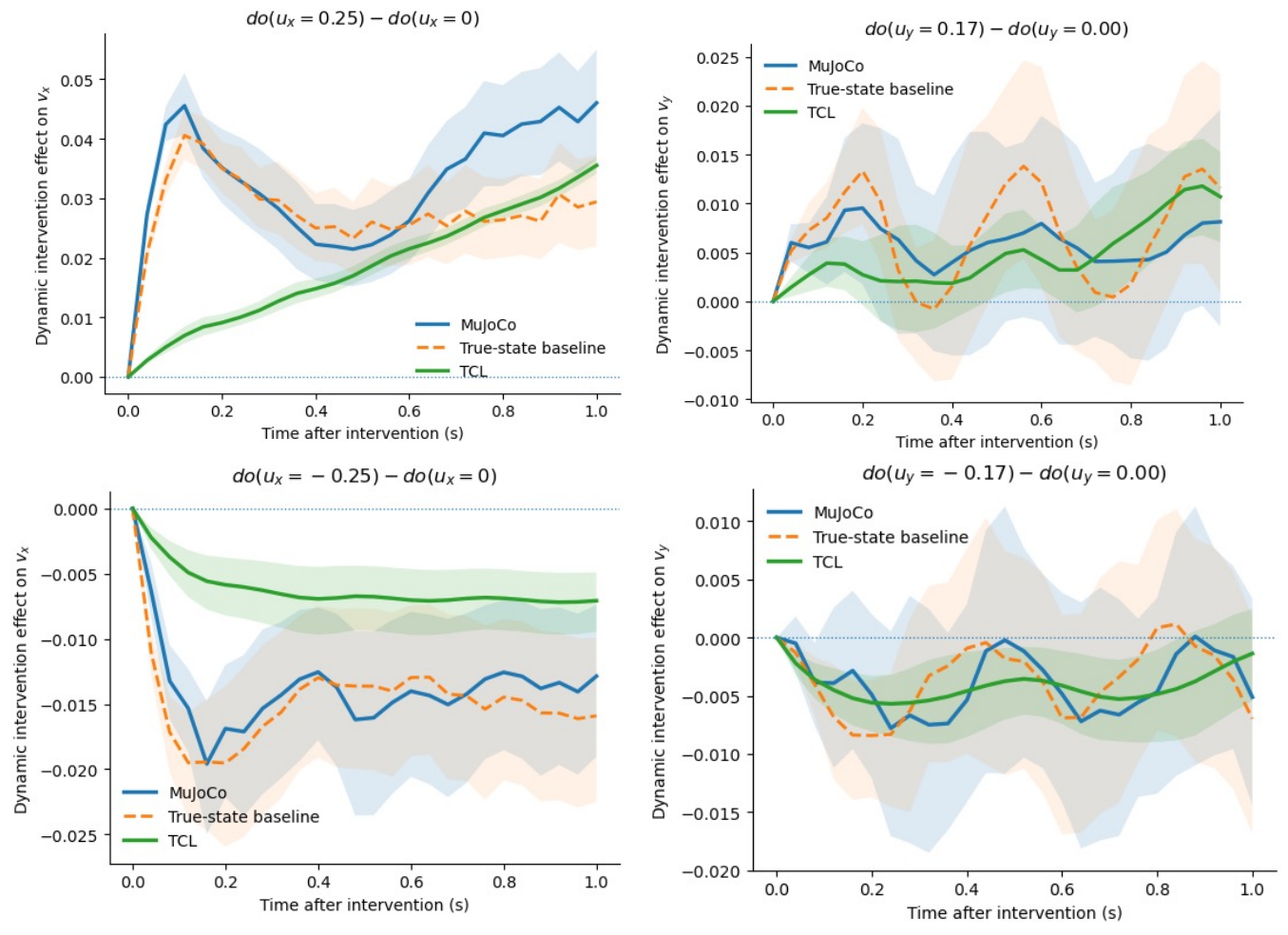}
    \caption{\textbf{Dynamic intervention effects under positive and negative control perturbations in Microduck.} Mean dynamic intervention effects over a 1s rollout horizon for interventions on the forward command \(u_x\) and lateral command \(u_y\). The top row shows positive interventions, $do(u_x=0.25)$ and $do(u_y=0.17)$, relative to the corresponding zero-control baselines, while the bottom row shows negative interventions, \(do(u_x=-0.25)\) and \(do(u_y=-0.17)\). Responses are measured on \(v_x\) for \(u_x\) interventions and on \(v_y\) for \(u_y\) interventions. Curves compare the MuJoCo simulator, the true-state dynamics baseline, and the TCL-aligned representation. Shaded regions indicate 95\% bootstrap confidence intervals across matched intervention trajectories.}
    \label{fig:microduck-DIE-all}
\end{figure}

\subsubsection{Results} \label{app:DTE-results}

\begin{table}[t]
% ============================================================
% Row 1: Positive interventions
% ============================================================

\begin{minipage}[H]{0.48\textwidth}
    \centering
    \captionof{table}{
    \textbf{Dynamic intervention effect on forward velocity in Microduck.}
    Errors are computed relative to the matched MuJoCo response for
    $do(u_x=0.25)$ versus $do(u_x=0)$.
    }\vspace{0.6em}
    \label{tab:microduck_die_pos_ux}

    \small
    \setlength{\tabcolsep}{3.2pt}
    \renewcommand{\arraystretch}{1.08}

    \begin{tabular}{ccccccc}
    \toprule
    \textbf{$H$} &
    \textbf{$t$ (s)} &
    \multicolumn{2}{c}{\textbf{RMSE $\downarrow$}} &
    \textbf{Rel. $\downarrow$} &
    \multicolumn{2}{c}{\textbf{Endpoint}} \\
    \cmidrule(lr){3-4}
    \cmidrule(lr){6-7}
    & &
    \textbf{Rep.} &
    \textbf{State} &
    \textbf{Rep.} &
    \textbf{Sim.} &
    \textbf{Rep.} \\
    \midrule
    1  & 0.04 & 0.0248 & 0.0065 & 0.910 & 0.0273 & 0.0025 \\
    2  & 0.08 & 0.0320 & 0.0092 & 0.897 & 0.0424 & 0.0046 \\
    5  & 0.20 & 0.0323 & 0.0072 & 0.845 & 0.0351 & 0.0085 \\
    10 & 0.40 & 0.0259 & 0.0059 & 0.772 & 0.0223 & 0.0130 \\
    25 & 1.00 & 0.0199 & 0.0110 & 0.571 & 0.0461 & 0.0281 \\
    \bottomrule
    \end{tabular}
\end{minipage}
\hfill
\begin{minipage}[H]{0.48\textwidth}
    \centering
    \captionof{table}{
    \textbf{Dynamic intervention effect on lateral velocity in Microduck.}
    Errors are computed relative to the matched MuJoCo response for
    $do(u_y=0.17)$ versus $do(u_y=0)$.
    }\vspace{0.6em}
    \label{tab:microduck_die_pos_uy}

    \small
    \setlength{\tabcolsep}{3.2pt}
    \renewcommand{\arraystretch}{1.08}

    \begin{tabular}{ccccccc}
    \toprule
    \textbf{$H$} &
    \textbf{$t$ (s)} &
    \multicolumn{2}{c}{\textbf{RMSE $\downarrow$}} &
    \textbf{Rel. $\downarrow$} &
    \multicolumn{2}{c}{\textbf{Endpoint}} \\
    \cmidrule(lr){3-4}
    \cmidrule(lr){6-7}
    & &
    \textbf{Rep.} &
    \textbf{State} &
    \textbf{Rep.} &
    \textbf{Sim.} &
    \textbf{Rep.} \\
    \midrule
    1  & 0.04 & 0.0046 & 0.0009 & 0.765 & 0.0060 & 0.0014 \\
    2  & 0.08 & 0.0038 & 0.0014 & 0.660 & 0.0055 & 0.0027 \\
    5  & 0.20 & 0.0047 & 0.0023 & 0.626 & 0.0097 & 0.0028 \\
    10 & 0.40 & 0.0041 & 0.0028 & 0.630 & 0.0038 & 0.0019 \\
    25 & 1.00 & 0.0036 & 0.0035 & 0.572 & 0.0084 & 0.0107 \\
    \bottomrule
    \end{tabular}
\end{minipage}

\vspace{1.2em}

% ============================================================
% Row 2: Negative interventions
% ============================================================

\begin{minipage}[H]{0.48\textwidth}
    \centering
    \captionof{table}{
    \textbf{Dynamic intervention effect on forward velocity in Microduck.}
    Errors are computed relative to the matched MuJoCo response for
    $do(u_x=-0.25)$ versus $do(u_x=0)$.
    }\vspace{0.6em}
    \label{tab:microduck_die_neg_ux}

    \small
    \setlength{\tabcolsep}{3.2pt}
    \renewcommand{\arraystretch}{1.08}

    \begin{tabular}{ccccccc}
    \toprule
    \textbf{$H$} &
    \textbf{$t$ (s)} &
    \multicolumn{2}{c}{\textbf{RMSE $\downarrow$}} &
    \textbf{Rel. $\downarrow$} &
    \multicolumn{2}{c}{\textbf{Endpoint}} \\
    \cmidrule(lr){3-4}
    \cmidrule(lr){6-7}
    & &
    \textbf{Rep.} &
    \textbf{State} &
    \textbf{Rep.} &
    \textbf{Sim.} &
    \textbf{Rep.} \\
    \midrule
    1  & 0.04 & 0.0042 & 0.0042 & 0.672 & -0.0063 & -0.0021 \\
    2  & 0.08 & 0.0075 & 0.0038 & 0.728 & -0.0132 & -0.0034 \\
    5  & 0.20 & 0.0109 & 0.0030 & 0.727 & -0.0169 & -0.0050 \\
    10 & 0.40 & 0.0104 & 0.0021 & 0.700 & -0.0125 & -0.0047 \\
    25 & 1.00 & 0.0102 & 0.0021 & 0.711 & -0.0129 & -0.0042 \\
    \bottomrule
    \end{tabular}
\end{minipage}
\hfill
\begin{minipage}[H]{0.48\textwidth}
    \centering
    \captionof{table}{
    \textbf{Dynamic intervention effect on lateral velocity in Microduck.}
    Errors are computed relative to the matched MuJoCo response for
    $do(u_y=-0.17)$ versus $do(u_y=0)$.
    }\vspace{0.6em}
    \label{tab:microduck_die_neg_uy}

    \small
    \setlength{\tabcolsep}{3.2pt}
    \renewcommand{\arraystretch}{1.08}

    \begin{tabular}{ccccccc}
    \toprule
    \textbf{$H$} &
    \textbf{$t$ (s)} &
    \multicolumn{2}{c}{\textbf{RMSE $\downarrow$}} &
    \textbf{Rel. $\downarrow$} &
    \multicolumn{2}{c}{\textbf{Endpoint}} \\
    \cmidrule(lr){3-4}
    \cmidrule(lr){6-7}
    & &
    \textbf{Rep.} &
    \textbf{State} &
    \textbf{Rep.} &
    \textbf{Sim.} &
    \textbf{Rep.} \\
    \midrule
    1  & 0.04 & 0.0017 & 0.0008 & 3.705 & -0.0005 & -0.0022 \\
    2  & 0.08 & 0.0012 & 0.0006 & 0.452 & -0.0039 & -0.0036 \\
    5  & 0.20 & 0.0014 & 0.0033 & 0.387 & -0.0048 & -0.0056 \\
    10 & 0.40 & 0.0016 & 0.0034 & 0.286 & -0.0056 & -0.0045 \\
    25 & 1.00 & 0.0019 & 0.0028 & 0.403 & -0.0049 & -0.0014 \\
    \bottomrule
    \end{tabular}
\end{minipage}

\end{table}

To assess whether the learned representation preserves intervention-relevant
dynamics, we evaluated controlled rollouts under matched positive and negative
perturbations of the forward and lateral velocity commands. For each
intervention, we compared the dynamic intervention effect predicted from the
aligned TCL representation with the corresponding MuJoCo response and with a
true-state dynamics baseline. The resulting intervention trajectories are shown
in Fig.~\ref{fig:microduck-DIE-all}, and the corresponding
horizon-wise quantitative results are summarized in
Tables~\ref{tab:microduck_die_pos_ux}--\ref{tab:microduck_die_neg_uy}.

For interventions on the forward command $u_x$, the learned representation
recovered the correct direction of the response for both positive and negative
perturbations, but generally underestimated its magnitude
(Fig.~\ref{fig:microduck-DIE-all};
Tables~\ref{tab:microduck_die_pos_ux} and
\ref{tab:microduck_die_neg_ux}). Under $do(u_x=0.25)$, the representation RMSE
decreased from $0.0248$ at $H=1$ to $0.0199$ at $H=25$, while the relative
error decreased from $0.910$ to $0.571$. The endpoint response at $1$~s was
$0.0281$ for the representation compared with $0.0461$ in MuJoCo. For the
negative intervention $do(u_x=-0.25)$, the representation similarly preserved
the sign of the effect but produced an attenuated response. At $H=25$, the
representation endpoint was $-0.0042$, compared with $-0.0129$ in the
simulator, with a relative error of $0.711$. The true-state dynamics baseline
was consistently more accurate for the forward-velocity interventions,
indicating that part of the remaining error arises from representation recovery
in addition to finite-horizon dynamics prediction.

The lateral interventions were recovered more accurately
(Fig.~\ref{fig:microduck-DIE-all};
Tables~\ref{tab:microduck_die_pos_uy} and
\ref{tab:microduck_die_neg_uy}). For $do(u_y=0.17)$, the representation RMSE
remained below $0.0048$ across all evaluated horizons and reached $0.0036$ at
$H=25$, with a relative error of $0.572$. The endpoint effect at $1$~s was
$0.0107$, compared with $0.0084$ in MuJoCo. For $do(u_y=-0.17)$, the
representation achieved low absolute error throughout the rollout, with RMSE
values between $0.0012$ and $0.0019$ for $H \geq 2$. At $H=10$, the relative
error was $0.286$, with a representation endpoint of $-0.0045$ compared with a
simulator effect of $-0.0056$. The larger relative error at $H=1$ for this case
is due to the very small simulator effect magnitude at the first prediction
step. For this intervention, the representation-based model also attains lower RMSE than the true-state baseline for $H\geq 5$ (Table~\ref{tab:microduck_die_neg_uy}), consistent with the small effect magnitude relative to rollout variability.

Overall, Fig.~\ref{fig:microduck-DIE-all} and
Tables~\ref{tab:microduck_die_pos_ux}--\ref{tab:microduck_die_neg_uy} show that
the aligned TCL representation preserves the qualitative direction and temporal
structure of controlled responses across both intervention signs and both
control dimensions. Recovery is strongest for lateral-velocity perturbations,
while forward-velocity effects are systematically attenuated in magnitude. The
comparison with the true-state baseline further separates
representation-related error from error introduced by the learned rollout
dynamics.

\subsection{Graph-Recovery Algorithms} \label{app:graph-recovery}
\begin{algorithm}[t]
\caption{Residual-Lasso graph recovery with trajectory-bootstrap stability}
\label{alg:residual_lasso}
\begin{algorithmic}[1]
\Require Aligned state trajectories
$\{\hat S^{(n)}\}_{n=1}^{N}$,
$\hat S^{(n)}\in\mathbb{R}^{T_n\times d}$;
reference parent counts $\{k_j\}_{j=1}^{d}$;
Lasso penalty $\lambda$;
number of bootstrap samples $B$;
trajectory sampling fraction $\rho$.
\Ensure Stability scores $\Pi\in[0,1]^{d\times d}$ and binary lag-1 graph
$\hat B\in\{0,1\}^{d\times d}$.

\State Standardize the aligned states using statistics estimated from the
training trajectories.

\For{$b=1,\ldots,B$}
    \State Resample a fraction $\rho$ of complete trajectories.
    \State Construct lag-one predictors and residual targets
    \[
    X_t=\hat S_t,
    \qquad
    R_t=\hat S_{t+1}-\hat S_t .
    \]

    \For{$j=1,\ldots,d$}
        \State Exclude $\hat S_{t,j}$ from the candidate cross-parent set and fit
        \[
        \hat\beta_j^{(b)}
        =
        \arg\min_{\beta}
        \frac{1}{2M_b}
        \|R_{:,j}-X_{:,-j}\beta\|_2^2
        +
        \lambda\|\beta\|_1 .
        \]

        \State Let $r_j=k_j-1$ denote the number of cross-lag parents
        for child $j$.
        \State Record the $r_j$ predictors with the largest
        $|\hat\beta_{ij}^{(b)}|$.
    \EndFor
\EndFor

\State Compute the trajectory-bootstrap stability score
\[
\Pi_{ij}
=
\frac{1}{B}
\sum_{b=1}^{B}
\mathbbm{1}
\!\left[
i \text{ is selected as a parent of } j
\right].
\]

\State Initialize $\hat B=0$ and include the reference self-lag edge
$\hat B_{jj}=1$ for each state variable.

\For{$j=1,\ldots,d$}
    \State Select the $r_j=k_j-1$ cross-parents with the largest
    stability scores $\Pi_{ij}$ and set the corresponding
    $\hat B_{ij}=1$.
\EndFor

\State \Return $\Pi,\hat B$.
\end{algorithmic}
\end{algorithm}

We use trajectory-level rather than frame-level resampling so that temporal
dependence within each trajectory is preserved. The reference parent count
$k_j$ is used only for graph binarization and is applied identically to all
methods, thereby controlling graph density and separating dependency ranking
from threshold calibration. AUPRC is additionally reported from the continuous
edge scores without fixing a binary threshold.

\begin{algorithm}[t]
\caption{Controlled lag-1 graph recovery in Microduck}
\label{alg:controlled_graph}
\begin{algorithmic}[1]
\Require Aligned state trajectories
$\{\hat S^{(n)}\}_{n=1}^{N}$,
$\hat S_t\in\mathbb{R}^{d}$;
control trajectories
$\{U^{(n)}\}_{n=1}^{N}$,
$U_t\in\mathbb{R}^{q}$;
reference parent counts $\{k_j\}_{j=1}^{d}$.
\Ensure Dependency scores
$J\in\mathbb{R}^{(d+q)\times d}$ and binary controlled graph
$\hat B\in\{0,1\}^{(d+q)\times d}$.

\State Construct controlled lag-one pairs
\[
X_t=[\hat S_t,U_t],
\qquad
Y_t=\hat S_{t+1}.
\]

\State Standardize $X_t$ and $Y_t$ using training-set statistics.

\State Train a neural transition model
$f_\theta:\mathbb{R}^{d+q}\rightarrow\mathbb{R}^{d}$ by minimizing
\[
\mathcal{L}(\theta)
=
\frac{1}{M}
\sum_{t}
\left\|
Y_t-f_\theta(X_t)
\right\|_2^2 .
\]

\State Compute the average absolute transition Jacobian
\[
J_{ij}
=
\frac{1}{M}
\sum_t
\left|
\frac{\partial f_{\theta,j}(X_t)}
     {\partial X_{t,i}}
\right|,
\qquad
i=1,\ldots,d+q,
\quad
j=1,\ldots,d .
\]

\State Initialize $\hat B=0$.

\For{$j=1,\ldots,d$}
    \State Select the $k_j$ inputs with the largest values of $J_{ij}$:
    \[
    \mathcal{P}_j
    =
    \operatorname{TopK}
    \left(
    \{J_{ij}\}_{i=1}^{d+q},k_j
    \right).
    \]
    \For{$i\in\mathcal{P}_j$}
        \State Set $\hat B_{ij}=1$.
    \EndFor
\EndFor

\State \Return $J,\hat B$.
\end{algorithmic}
\end{algorithm}
For Microduck, $S_t=[v_x,v_y,\omega_z,h,\phi,\theta],
\quad U_t=[u_x,u_y,u_\omega],$
so the transition model has nine candidate state/control inputs and six
next-state outputs. The same controlled-dynamics procedure is applied to the
true physical state to obtain the true-state dynamics baseline.

% \begin{algorithm}[H]
% \caption{Matched dynamic intervention effect in Microduck} \label{alg:microduck_die}
% \begin{algorithmic}[1]
% \Require Matched trajectories $i=1,\ldots,N_I$;
% intervention time $t^\star$;
% intervention values $a,b$;
% rollout horizon $H$;
% simulator and learned controlled-dynamics model.
% \Ensure Simulator and model dynamic intervention effects.

% \For{$i=1,\ldots,N_I$}
%     \State Reconstruct the simulator to the same state
%     $S_{i,t^\star}$ by replaying the original trajectory up to $t^\star$.

%     \State From this matched state, generate two simulator rollouts under
%     $do(u_x=a)$ and $do(u_x=b)$ for $H$ steps.

%     \State From the corresponding model state, generate matched model rollouts
%     under the same two interventions.
% \EndFor

% \For{$k=1,\ldots,H$}
%     \State Compute the simulator dynamic intervention effect
%     \[
%     \mathrm{DIE}^{\mathrm{sim}}_j(k;a,b)
%     =
%     \frac{1}{N_I}
%     \sum_{i=1}^{N_I}
%     \left(
%     s^{do(a)}_{i,t^\star+k,j}
%     -
%     s^{do(b)}_{i,t^\star+k,j}
%     \right).
%     \]

%     \State Compute $\widehat{\mathrm{DIE}}_j(k;a,b)$ analogously from
%     the learned model rollouts.
% \EndFor

% \State Evaluate horizon-wise RMSE and relative error between
% $\widehat{\mathrm{DIE}}$ and $\mathrm{DIE}^{\mathrm{sim}}$.

% \State Estimate confidence intervals by paired trajectory bootstrap.

% \State \Return
% $\mathrm{DIE}^{\mathrm{sim}},
% \widehat{\mathrm{DIE}}$ and intervention-error metrics.
% \end{algorithmic}
% \end{algorithm}

% #############################
\subsection{Detailed Related Work}
\textbf{World models and latent dynamical representations.}
World models learn latent representations that support prediction, planning, and
control in high-dimensional environments
\citep{ha2018world,hafner2019learning,hafner2020dream,hafner2023mastering}.
These models have achieved strong performance in reinforcement learning and
sequential decision-making, where the primary objectives are typically
reconstruction, forecasting, reward prediction, or control. However, predictive
accuracy does not ensure that the learned latent state corresponds to the
underlying physical or causal state of the system. This distinction is especially
important in scientific settings, where intervention and mechanistic
interpretation require representations that are not only predictive but also
identifiable and structurally meaningful. Our work studies whether frozen
diffusion representations can be transformed into latent states that recover the
underlying dynamical variables and their directed dependencies.

\textbf{Diffusion models and latent representations.}
Diffusion models are powerful generative models for high-dimensional data,
including images and videos
\citep{sohl2015deep,ho2020denoising,song2021score}. Latent diffusion models
improve scalability by performing generation in compressed representation spaces
rather than directly in pixel space \citep{rombach2022high}. Although these
representations often contain rich semantic and structural information, they are
primarily optimized for generative fidelity rather than identifiability or causal
interpretability. Their coordinates therefore need not correspond to the latent
dynamical sources governing the observations. We build on frozen diffusion
representations and ask whether auxiliary-aligned low-dimensional embeddings can
recover identifiable dynamical states and support structural causal analysis.

\textbf{Contrastive representation learning.}
Contrastive learning learns structured representations by bringing positive pairs
together while separating negative pairs
\citep{oord2018representation,chen2020simple,he2020momentum}. The information
captured by the learned embedding depends strongly on how positive and negative
pairs are constructed. In dynamical settings, time-contrastive learning (TCL)
uses temporal nonstationarity to recover latent sources under nonlinear ICA
assumptions
\citep{hyvarinen2016unsupervised,hyvarinen2017nonlinear}, while generalized
contrastive learning (GCL) uses auxiliary variables to obtain identifiability
under broader forms of side information \citep{hyvarinen2019nonlinear}. CEBRA
similarly learns low-dimensional embeddings aligned with behavioral, temporal,
or neural variables \citep{schneider2023cebra}. In our setting, these objectives
serve as auxiliary-alignment mechanisms applied to frozen diffusion latents, and
we evaluate whether the resulting embeddings recover identifiable states and
support downstream causal structure recovery.

\textbf{Nonlinear ICA and identifiability with auxiliary variables.}
Nonlinear latent-variable models are generally non-identifiable without
additional assumptions
\citep{hyvarinen1999nonlinear,locatello2019challenging}. Nonlinear ICA provides
conditions under which latent sources can nevertheless be recovered from
nonlinear observations. In particular, auxiliary information such as time,
labels, experimental conditions, or context can provide sufficient structure for
identifiability
\citep{hyvarinen2016unsupervised,hyvarinen2017nonlinear,hyvarinen2019nonlinear,
khemakhem2020variational}. These results motivate our theoretical analysis. We
connect auxiliary-aligned contrastive learning on diffusion latents to
density-ratio estimation and auxiliary-variable nonlinear ICA, and use this
connection to establish sufficient conditions for identifying latent dynamical
states from frozen pretrained representations.

\textbf{Causal representation learning.}
Causal representation learning seeks latent representations that support causal
reasoning, intervention, and transfer
\citep{bengio2013representation,scholkopf2021toward}. A central challenge is
that representations may capture predictive or statistical regularities without
preserving the variables and mechanisms required for causal reasoning. Recent
work therefore studies conditions under which learned coordinates correspond to
causal variables or preserve structural mechanisms. Our analysis contributes to
this direction by showing that componentwise identifiable dynamical coordinates
preserve the factorization and lagged parent structure of the underlying dynamic
SCM. This links latent identifiability directly to preservation of causal
structure in time-dependent systems.

\textbf{Causal discovery in dynamical systems.}
Causal discovery aims to recover directed dependencies from observational or
interventional data
\citep{spirtes2000causation,pearl2009causality,peters2017elements}. In
time-series and dynamical systems, temporal ordering and lagged dependencies
provide additional structure for identifying directed interactions
\citep{granger1969investigating,runge2019detecting,tank2021neural}. Many such
methods, however, assume that the observed variables already correspond to
meaningful system states. In high-dimensional observations such as video, these
variables may instead be latent, making causal discovery dependent on the quality
of the learned representation. We address this issue by first establishing
conditions for identifiable state recovery and then characterizing lagged causal
edges through sparsity of the inverse transition-noise Jacobian. This connection
is related to Jacobian-based causal discovery under nonlinear ICA
\citep{reizinger2023jacobian}, but is developed here for auxiliary-aligned
representations of dynamical systems and time-lagged causal structure.

\textbf{Causal representation learning for scientific dynamical systems.}
Recent work has increasingly connected causal representation learning with
scientific dynamical modeling. ~\citet{yao2024marrying} study the integration of identifiable causal representations with dynamical-system models
for scientific applications, while~\citet{sun2025causal} develop identification theory for causal representation learning from multimodal biomedical observations. More broadly,~\citet{camps2023discovering} review methods for discovering causal relations,
equations, and scientific laws from observational data. Our work is
complementary, focusing specifically on frozen pretrained diffusion
representations of high-dimensional dynamical systems and evaluating whether
auxiliary alignment recovers latent state structure, lagged causal dependencies,
and intervention-relevant dynamics.

\end{document}